\newcommand{\reals}{{\mathbb{R}}}
\newcommand{\naturals}{{\mathbb{N}}}
\newcommand{\bfu}[1]{{\bm{#1}}}
\newcommand{\mean}[1]{\mathbb{E}\left[{#1}\right]}
\newcommand{\vc}[1]{\mathbf{#1}}
\newcommand{\norm}[1]{\|{#1}\|}
\newcommand{\pnorm}[2]{\|{#1}\|_{#2}}
\newcommand{\tp}[1]{{#1}^{\text{T}}}
\newcommand{\calA}{{\cal A}}
\newcommand{\calI}{{\cal I}}
\newcommand{\calJ}{{\cal J}}
\newcommand{\calF}{{\cal F}}
\newcommand{\calR}{{\cal R}}
\newcommand{\calH}{{\cal H}}
\DeclareMathOperator*{\argmax}{arg\,max}
\DeclareMathOperator*{\argmin}{arg\,min}
\theoremstyle{plain}
\newtheorem{theorem}{Theorem}[section]
\newtheorem{lemma}[theorem]{Lemma}
\theoremstyle{definition}
\newtheorem{definition}[theorem]{Definition}
\theoremstyle{remark}
\newtheorem*{theorem2}{Theorem}
\icmltitlerunning{Adversarial Vulnerability of Randomized Ensembles}
\begin{document}

\twocolumn[
\icmltitle{Adversarial Vulnerability of Randomized Ensembles}



\icmlsetsymbol{equal}{*}

\begin{icmlauthorlist}
\icmlauthor{Hassan Dbouk}{uiuc}
\icmlauthor{Naresh R. Shanbhag}{uiuc}
\end{icmlauthorlist}

\icmlaffiliation{uiuc}{Department of Electrical and Computer Engineering, University of Illinois at Urbana-Champaign, Urbana, USA}

\icmlcorrespondingauthor{Hassan Dbouk}{hdbouk2@illinois.edu}

\icmlkeywords{Machine Learning, ICML}

\vskip 0.3in
]



\printAffiliationsAndNotice{}  

\begin{abstract}
Despite the tremendous success of deep neural networks across various tasks, their vulnerability to imperceptible adversarial perturbations has hindered their deployment in the real world. Recently, works on randomized ensembles have empirically demonstrated significant improvements in adversarial robustness over standard adversarially trained (AT) models with minimal computational overhead, making them a promising solution for safety-critical resource-constrained applications. However, this impressive performance raises the question: \textit{Are these robustness gains provided by randomized ensembles real?} In this work we address this question both theoretically and empirically. We first establish theoretically that commonly employed robustness evaluation methods such as adaptive PGD provide a false sense of security in this setting. Subsequently, we propose a theoretically-sound and efficient adversarial attack algorithm (ARC) capable of compromising random ensembles even in cases where adaptive PGD fails to do so. We conduct comprehensive experiments across a variety of network architectures, training schemes, datasets, and norms to support our claims, and empirically establish that randomized ensembles are in fact \textit{more vulnerable} to $\ell_p$-bounded adversarial perturbations than even standard AT models. Our code can be found at \url{https://github.com/hsndbk4/ARC}.
\end{abstract}

\section{Introduction}
\label{sec:intro}

Deep neural networks (DNNs) are ubiquitous today, cementing themselves as the \emph{de facto} learning model for various machine learning tasks \cite{sarwar2001item,he2016deep,fastrcnn,farhadi2018yolov3,devlin2018bert,zhang2017hello}. Yet, their vulnerability to \emph{imperceptible} adversarial perturbations \cite{szegedy2013intriguing,goodfellow2014explaining,ilyas2019adversarial} has caused major concerns regarding their security, and hindered their deployment in safety-critical applications. 

Various attack algorithms for crafting such perturbations have been proposed in the literature \cite{kurakin2016adversarial,moosavi2016deepfool,carlini2017towards}; the majority of which rely on moving the input along the gradient to maximize a certain loss function. Powerful attacks, such as projected gradient descent (PGD) \cite{madry2018towards}, perform this process iteratively until a perturbation is found, and can successfully fool undefended DNNs causing them to misclassify \emph{every} test sample in the test set.

\begin{figure}[!t]
  \centering
   \includegraphics[width=0.99\columnwidth]{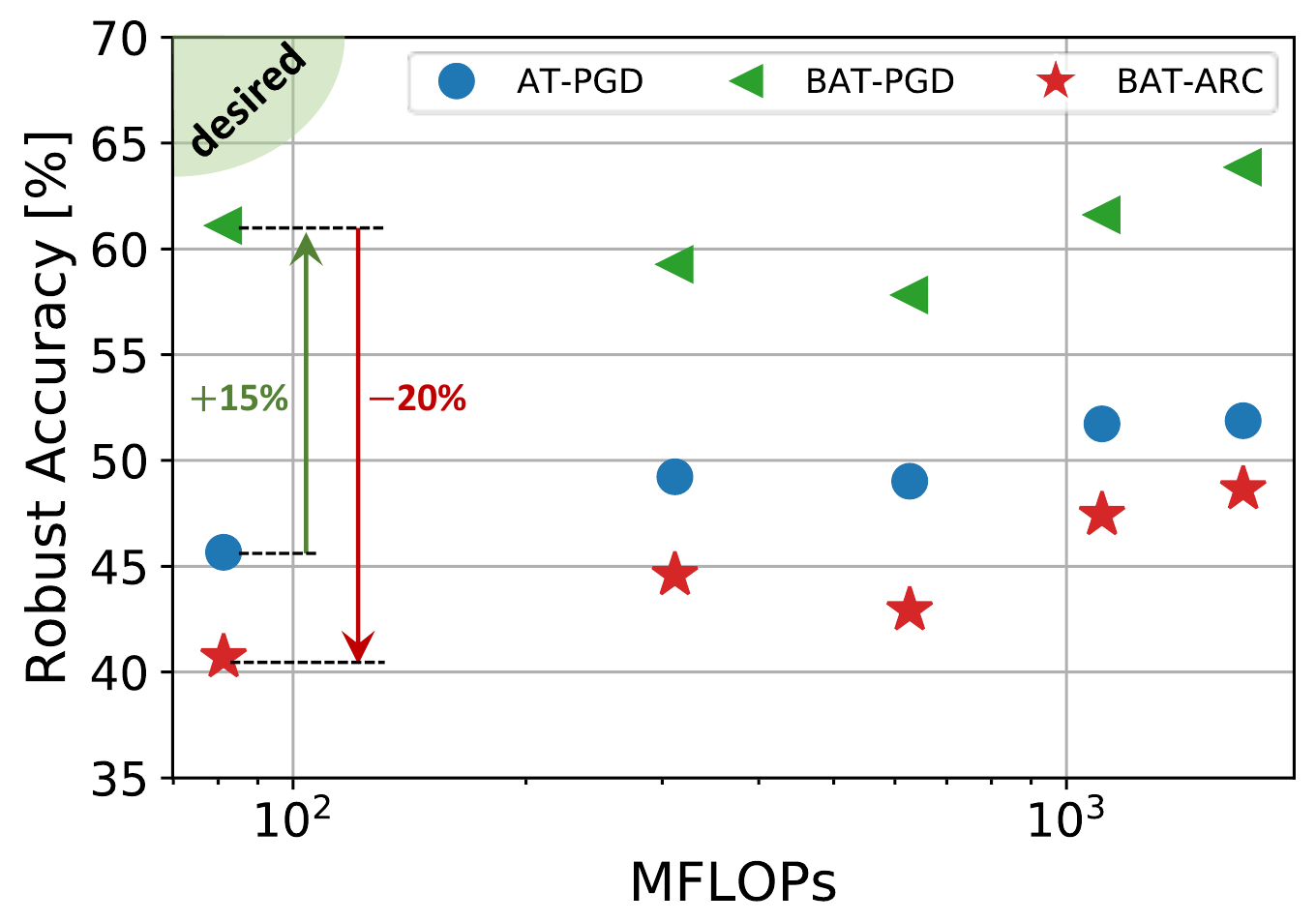}\label{fig:motivation}%
  \caption{The adversarial \emph{vulnerability} of randomized ensembles across various network architectures (from left to right: ResNet-20, MobileNetV1, VGG-16, ResNet-18, and WideResNet-28-4) against $\ell_\infty$ norm-bounded perturbations on CIFAR-10. Compared to a standard adversarially trained (AT) single model (\color{RoyalBlue}$\bullet$\color{black}), a randomized ensemble of two models obtained via BAT (\color{ForestGreen}$\blacktriangleleft$\color{black}) \cite{pinot2020randomization} offers significant improvement in robustness at iso-FLOPs, when evaluated using an adaptive PGD adversary. However, the robust accuracy of the BAT ensemble suffers a massive drop and becomes worse than even the single model AT baseline when evaluated using the proposed ARC (\color{Maroon}$\star$\color{black}) adversary, rendering the ensemble obsolete.} 
\end{figure}

In order to robustify DNNs against adversarial perturbations, several defense strategies have been proposed \cite{cisse2017parseval,pang2019improvingADP,yang2019me,trades,madry2018towards}. While some of them were subsequently broken with more powerful attacks (adaptive attacks) \cite{tramer2020adaptive,athalye2018obfuscated}, adversarial training (AT) \cite{goodfellow2014explaining,trades,madry2018towards} remains the strongest empirical defense that thus far has stood the test of time.

Recently, the work of \cite{pinot2020randomization} showed, from a game-theoretic point of view, that the robustness of any deterministic classifier can be outperformed by a randomized one. Specifically, \cite{pinot2020randomization} 
proposed constructing random classifiers by randomly sampling from an ensemble of classifiers trained via boosted adversarial training (BAT), and empirically demonstrated \emph{impressive} ($\sim$15\%) improvements in robust accuracy over single model standard AT classifiers (see Fig.~\ref{fig:motivation}). Randomized ensembles also present a promising way for achieving high robustness with \emph{limited} compute resources since the number of FLOPs per inference is fixed\footnote{Assuming all the models in the ensemble share the same architecture.} --
a problem that has garnered much interest recently \cite{dbouk2021generalized, sehwag2020hydra, NAS}. However, this recent success of randomized ensembles raises the following question:
\begin{center}
    \emph{Are the robustness gains provided by randomized ensembles real?}   
\end{center}

In this work, we tackle this question both theoretically and empirically. Specifically, we make the following contributions:
\begin{itemize}
    \item 
    We prove that standard attack algorithms such as PGD suffer from a fundamental flaw when employed to attack randomized ensembles of linear classifiers -- there are no guarantees that it will find an $\ell_p$-bounded adversarial perturbation even when one exists (\emph{inconsistency}).
    \item We propose a provably (for linear classifiers) \textit{consistent} and efficient adversarial perturbation algorithm -- the \textbf{A}ttacking \textbf{R}andomized ensembles of \textbf{C}lassifiers (\textbf{ARC}) algorithm -- that is tailored to evaluate the robustness of randomized ensembles against $\ell_p$-bounded perturbations. 
    \item We employ the ARC algorithm to demonstrate empirically that randomized ensembles of DNNs are in fact \textit{more} vulnerable to $\ell_p$-bounded perturbations than standard AT DNNs (see Fig.~\ref{fig:motivation}). 
    \item We conduct comprehensive experiments across a variety of network architectures, training schemes, datasets, and norms to support our observations.
\end{itemize}
Our work suggests the need for improved adversarial training algorithms for randomized ensembles.

\section{Background and Related Work} \label{sec:background}
\textbf{Adaptive Attacks}: Benchmarking new defenses against adaptive attacks, i.e., attacks carefully designed to target a given defense, has become standard practice today, thanks to the works of \cite{athalye2018obfuscated, tramer2020adaptive}. The authors in \cite{athalye2018obfuscated} identify the phenomenon of obfuscated gradients, a type of gradient masking, that leads to a false
sense of security in defenses against adversarial
examples. By identifying and addressing different types of obfuscated gradients such as stochastic gradients due to randomization, they were able to circumvent and fool most defenses proposed at the time. More recently, \cite{tramer2020adaptive} identified and circumvented \emph{thirteen} defenses from top venues (NeurIPS, ICML, and ICLR), despite most employing adaptive attacks, by customizing these attacks to the specific defense. Therefore, it is imperative when evaluating a new defense to not only re-purpose existing adaptive attacks that circumvented a prior defense, but also \say{adapt} these attacks to target the new defense. Our work falls in the category of adaptive attacks for randomized ensembles. Specifically, we identify an unforeseen vulnerability of randomized ensembles by demonstrating the shortcomings of commonly used PGD-based attacks and then proposing an adaptive attack to successfully circumvent them.

\textbf{Ensembles and Robustness}: Traditionally, the use of ensembling techniques such as bagging \cite{breiman1996bagging} and boosting \cite{dietterich2000ensemble, freund1997decision} has been a popular technique for improving the performance of machine learning models. Building on that success, and in order to address the apparent vulnerability of deep nets, a recent line of work \cite{kariyappa2019improvingGAL,pang2019improvingADP,sen2019empir,yang2020dverge,yang2021trs} has proposed the use of DNN ensembles. The intuition is that promoting some form of diversity between the different members of the ensemble would alleviate the adversarial transferability that DNNs exhibit, and as a byproduct improve overall robustness. Specifically, GAL \cite{kariyappa2019improvingGAL} proposes maximizing the cosine distance between the members' gradients, whereas EMPIR \cite{sen2019empir} leverages extreme model quantization to enforce ensemble diversity. ADP \cite{pang2019improvingADP} proposes a training regularizer that promotes different members to have high diversity in the non-maximal predictions. These earlier works have been subsequently broken by adaptive and more powerful attacks \cite{tramer2020adaptive}. More recently, DVERGE \cite{yang2020dverge} presents a robust ensemble training approach that diversifies the non-robust features of the different members via an adversarial training objective function. TRS \cite{yang2021trs}, on the other hand, provides a theoretical framework for understanding adversarial transferability and establishes bounds that show model smoothness and input gradient diversity are both required for low transferability. The authors also propose an ensemble training algorithm to empirically reduce transferability. While none of these works deal with randomized ensembles explicitly, their proposed techniques can be seamlessly adapted to that setting, where our work shows that existing powerful attacks such as PGD can provide a \emph{false} sense of robustness.

\textbf{Randomization and Robustness}: Randomization has been shown, both theoretically and empirically, to have a very strong connection with adversarial robustness. Randomized smoothing \cite{lecuyer2019certified,cohen2019certified} proves a tight robustness guarantee in $\ell_2$ norm for smoothing classifiers with Gaussian noise. Prior to \cite{lecuyer2019certified}, no certified defense has been shown feasible on ImageNet. In the same spirit, SNAP \cite{patil2021robustifying} enhances the robustness of single attack $\ell_\infty$ AT frameworks against the union of perturbation types via shaped noise augmentation. 
Recently, the work of \cite{pinot2020randomization} showed, from a game-theoretic point of view, that the robustness of any deterministic classifier can be outperformed by a randomized one when evaluated against deterministic attack strategies. A follow-up work from \cite{pmlr-v139-meunier21a} removes any assumptions on the attack strategy and further motivates the use of randomization to improve robustness. Building on their framework, the authors of \cite{pinot2020randomization} propose training random ensembles of DNNs using their BAT algorithm and \emph{empirically} demonstrate impressive robustness gains over individual adversarially trained DNNs on CIFAR-10, when evaluated against a strong PGD adversary. However, our work demonstrates that these robustness gains give a \emph{false} sense of security and propose a new attack algorithm better suited for randomized ensembles. \looseness=-1

\section{Preliminaries} \label{sec:preliminaries}
\subsection{Problem Setup}
Let $\calF=\{f_1, f_2, ..., f_M\}$ be a collection of $M \in \naturals$ arbitrary $C$-ary differentiable classifiers $f_i: \reals^D \rightarrow \reals^C$. A classifier $f_i$ assigns the label $m\in[C]$ to data-point $\vc{x} \in \reals^D$ if and only if $\big[f_i(\vc{x})\big]_m > \big[f_i(\vc{x})\big]_j \ \ \forall j\in[C]\setminus\{m\}$,
where $[f_i(\vc{x})]_j$ is the $j^{\text{th}}$ component of $f_i(\vc{x})$. 

A randomized ensemble classifier (REC) $g$ is defined over the tuple $(\calF,\bm{\alpha})$ as: $\Pr\{g(\vc{x}) = f_d(\vc{x})\}= \alpha_d$ where $d\in[M]$ is sampled \textit{independent} of $\vc{x}$. This independence assumption is crucial as the choice of the sampled classifier cannot be tampered by any adversary. \looseness=-1

For any labeled data $(\vc{x},y) \in \reals^D \times [C]$, define $L$ as the expected classification accuracy of $g$:
\begin{align} \label{eq:acc}
    \begin{split}
    L(\vc{x},y,\bm{\alpha}) &= \mean{\mathbbm{1}\left\{\argmax_{j\in[C]}\big[g(\vc{x})\big]_j = y\right\}} \\
    &= \sum_{i=1}^M \alpha_i \mathbbm{1}\left\{\argmax_{j\in[C]}\big[f_i(\vc{x})\big]_j = y\right\}
    \end{split}
\end{align}
where $\mathbbm{1}\left\{.\right\}$ is the indicator function. Note that we have $L(\vc{x},y,\bm{\alpha}) \leq 1$ with equality if and only if all the classifiers in $\calF$ correctly classify $\vc{x}$. Similarly $L(\vc{x},y,\bm{\alpha}) \geq 0$ with equality if and only if all all the classifiers misclassify $\vc{x}$. We always assume that $\alpha_i>0$, otherwise we can remove $f_i$ from the ensemble as it is not used.

\begin{definition}
\label{def:adversarial}
A perturbation $\bm{\delta}\in\reals^D$ is called adversarial to $(\calF,\bm{\alpha})$ for the labeled data-point $(\vc{x},y)$ if:
\begin{equation}
    L(\vc{x}+\bm{\delta},y,\bm{\alpha}) < L(\vc{x},y,\bm{\alpha})  
\end{equation}
\end{definition}
That is, the mean classification accuracy of the randomized ensemble strictly decreases when $\vc{x}$ is perturbed by $\bm{\delta}$. To that end, an adversary's objective is to search for a norm-bounded adversarial perturbation $\bm{\delta}^*$ such that:
\begin{equation}\label{eq:opt}
    \bm{\delta}^* =  \argmin_{\bm{\delta}: \pnorm{\bm{\delta}}{p} \leq \epsilon} L(\vc{x}+\bm{\delta},y,\bm{\alpha})  
\end{equation}
Let $\calA$ be an adversarial perturbation generation algorithm that takes as input an REC $(\calF, \bm{\alpha})$, a labeled data-point $(\vc{x},y)$, and a norm bound $\epsilon$ and generates a perturbation $\bm{\delta}_{\calA}:\pnorm{\bm{\delta}_{\calA}}{p}\leq \epsilon$ while attempting to solve the optimization problem in \eqref{eq:opt}. 

\begin{definition}
\label{def:consistent}
Given REC $(\calF, \bm{\alpha})$, a labeled data-point $(\vc{x},y)$, and a norm bound $\epsilon$: 
An adversarial perturbation generation algorithm $\calA$ is said to be \emph{consistent} if it finds a norm-bounded adversarial $\bm{\delta}_{\calA}$ whenever $L(\vc{x},y,\bm{\alpha})=1$ and a norm-bounded adversarial $\bm{\delta}$ exists.
\end{definition}

Definition~\ref{def:consistent} implies that a consistent algorithm will always find an adversarial perturbation, if one exists, when all the classifiers in the ensemble correctly classify $\vc{x}$. It does not imply that it will find the optimal adversarial perturbation $\bm{\delta}^*$. Conversely, an \emph{inconsistent} adversarial algorithm will fail to find a norm-bounded adversarial perturbation under these conditions and provide an overly optimistic estimate of adversarial robustness. Note that the condition $L(\vc{x},y,\bm{\alpha})=1$ is not too restrictive, since DNNs typically exhibit very high accuracies on clean samples.

\subsection{Projected Gradient Descent}
Optimization-based attacks, such as PGD \cite{madry2018towards, maini2020adversarial}, search for adversarial perturbations around the data sample $(\vc{x},y)$ for some classifier $f$ and loss function $l$, by solving the following maximization:
\begin{equation}
    \bm{\delta}^* = \argmax_{\bm{\delta}:\pnorm{\bm{\delta}}{p} \leq \epsilon} l\big(f(\vc{x}+\bm{\delta}),y\big)
\end{equation}
in an iterative fashion $\forall k \in[K]$:
\begin{equation} \label{eq:pgd}
    \bm{\delta}^{(k)} = \Pi_\epsilon^p \left( \bm{\delta}^{(k-1)} + \eta \mu_p\left(\nabla_{\vc{x}} l(\vc{x}+\bm{\delta}^{(k-1)} ,y)\right)\right)
\end{equation}
where $\bm{\delta}^{(0)}$ is typically a random initial guess inside the $\ell_p$ ball of radius $\epsilon$, $\eta$ is the step size, $\mu_p$ computes the unit-norm $\ell_p$ steepest direction, and $\Pi_\epsilon^p$ computes the projection onto the $\ell_p$ ball of radius $\epsilon$. A typical choice of loss function is the cross-entropy loss \cite{madry2018towards}. To simplify notation, we will use $l(\vc{x}+\bm{\delta},y)$ instead of $l\big(f(\vc{x}+\bm{\delta}),y\big)$. For instance when $p=\infty$, the update equation in \eqref{eq:pgd} reduces to:
\begin{equation} \label{eq:pgd-linf}
    \bm{\delta}^{(k)} = \text{clip}\left( \bm{\delta}^{(k-1)} + \eta \text{sgn}\left(\vc{g}\right),-\epsilon,\epsilon\right)
\end{equation}
where $\vc{g}$ is the gradient of the loss as in \eqref{eq:pgd}.

\textbf{Adaptive PGD for REC}: As pointed out by \cite{athalye2018obfuscated, tramer2020adaptive}, randomized defenses need to take caution when evaluating robustness. In particular, one should use the expectation over transformation (EOT) method to avoid gradient masking. The discrete nature of our setup allows for exact computation of expectations, and eliminates the need for Monte Carlo sampling \cite{pinot2020randomization}. Therefore, in order to solve the optimization in \eqref{eq:opt} and evaluate the robustness of randomized ensembles, \cite{pinot2020randomization} adapted the PGD update  rule \eqref{eq:pgd} by replacing the loss function with its expected value $ \mean{l(\vc{x}+\bm{\delta},y)}$, which can be computed efficiently. Interestingly, we find that it is in fact this adaptation of PGD that leads to an overly optimistic estimate of the robustness of randomized ensembles -- a point we address in the next section.

\section{Limitations of Adaptive PGD} \label{sec:pgd}
We analyze the nature of adversarial perturbations obtained via adaptive PGD (APGD) for the special case of an REC -- an ensemble of binary linear classifiers (BLCs). In doing so, we unveil the reason underlying APGD's weakness in attacking RECs. 

Assume an ensemble $(\calF, \bm{\alpha})$ of $M$ BLCs:
\begin{equation}
    f_i(\vc{x}) = \tp{\vc{w}}_i\vc{x} + b_i \ \ \ \forall i \in [M]
\end{equation}
where $\vc{w}_i \in \reals^D$ and $b_i \in \reals$ are the weight and bias, respectively, of the $i^{th}$ BLC. The BLC $f_i$ assigns label $1$ to $\vc{x}$ if and only if $f_i(\vc{x})>0$, and $-1$ otherwise \footnote{For notional convenience, we assume $y\in\{-1,1\}$ for binary classification.}. The associated REC is given by $\Pr\{g(\vc{x}) = f_d(\vc{x})\}= \alpha_d$ where $d\in[M]$. 
We direct the reader to Appendix~\ref{app:proof} for detailed derivations and proofs of the results in this section.
\subsection{Attacking the Auxiliary Classifier}
The APGD update rule employs the expected loss $ \mean{l(\vc{x}+\bm{\delta},y)}$ in \eqref{eq:pgd}, where $l$ is the binary cross-entropy loss. Now consider the gradient $\vc{g}$ of the expected loss. By linearity of the expectation, we have:
\begin{align} \label{eq:grad-eq}
\begin{split}
     \vc{g} &= -y \sum_{i=1}^M \alpha_i\lambda_i \vc{w}_i
\end{split}
\end{align}
where $\lambda_i$'s belong to the open interval $(0,1)$.
Examining the expression in \eqref{eq:grad-eq}, we observe that the direction of $\vc{g}$ is determined purely by a linear combination of the classifiers' weight vectors. With that mind, if we define an \textit{auxiliary} classifier $\bar{f}$ such that:
\begin{equation}
    \bar{f}(\vc{x}) = \tp{\left(\sum_{i=1}^M \alpha_i\lambda_i \vc{w}_i\right)}\vc{x} + \bar{b}
\end{equation}
for some arbitrary $\bar{b}\in \reals$, then it is easy to see that the APGD attack is actually equivalent to the standard PGD attack evaluated against the auxiliary classifier $\bar{f}$. More generally, we have the following result:
\begin{theorem}[Auxiliary Classifiers]
\label{thm:auxiliary} For any REC $(\calF,\bm{\alpha})$ consisting of BLCs and any data-point $(\vc{x},y)$, there exists a sequence of auxiliary BLCs $\{\bar{f}^{(k)}\}_{k=1}^K $ such that in the $k^{\text{th}}$ iteration, the output of APGD against the REC is equal to the output of PGD against $\bar{f}^{(k)}$.
\end{theorem}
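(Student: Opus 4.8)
The plan is to make precise the informal observation made just before the statement: the APGD update rule depends on the expected loss \emph{only} through its input gradient, and that gradient is always a strictly-positively-weighted combination of the \emph{fixed} weight vectors $\vc{w}_i$, hence it can be reproduced exactly by the input gradient of a single binary linear classifier whose weight vector is that combination.

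First I would compute the gradient of the expected binary cross-entropy loss explicitly. Writing the per-classifier loss as $l(\vc{x}+\bm{\delta},y) = \log\!\big(1+e^{-y f_i(\vc{x}+\bm{\delta})}\big)$ and using linearity of expectation over the sampling distribution $\bm{\alpha}$, I obtain $\vc{g} = -y\sum_{i=1}^M \alpha_i \lambda_i \vc{w}_i$ with $\lambda_i = \sigma\!\big(-y f_i(\vc{x}+\bm{\delta})\big)\in(0,1)$, where $\sigma$ is the logistic sigmoid; this is exactly \eqref{eq:grad-eq}. The crucial features are that each $\lambda_i$ depends on the current iterate $\bm{\delta}$ but is always strictly positive, while the $\vc{w}_i$ are fixed once and for all.

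Next, for each $k\in[K]$ I would define the auxiliary classifier from the coefficients evaluated at the $(k-1)$-th APGD iterate $\bm{\delta}^{(k-1)}$: set $\bar f^{(k)}(\vc{x}) = \tp{\big(\sum_{i=1}^M \alpha_i \lambda_i^{(k-1)} \vc{w}_i\big)}\vc{x} + \bar b^{(k)}$ with $\lambda_i^{(k-1)} = \sigma\!\big(-y f_i(\vc{x}+\bm{\delta}^{(k-1)})\big)$ and $\bar b^{(k)}$ arbitrary. The input gradient of the binary cross-entropy loss of $\bar f^{(k)}$ at $\vc{x}+\bm{\delta}^{(k-1)}$ equals $-y\,\bar\lambda^{(k)}\sum_i \alpha_i \lambda_i^{(k-1)}\vc{w}_i$ for some $\bar\lambda^{(k)}\in(0,1)$; since $\mu_p$ returns the unit-norm $\ell_p$ steepest-descent direction (e.g. $\mathrm{sgn}(\cdot)$ for $p=\infty$, $\ell_2$-normalization for $p=2$) it satisfies $\mu_p(c\,\vc{v})=\mu_p(\vc{v})$ for every $c>0$, so the extra positive factor $\bar\lambda^{(k)}$ is harmless, and the bias $\bar b^{(k)}$ does not affect the input gradient at all. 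Hence one PGD step against $\bar f^{(k)}$ applied to $\bm{\delta}^{(k-1)}$ produces $\Pi_\epsilon^p\big(\bm{\delta}^{(k-1)} + \eta\,\mu_p(\vc{g}^{(k)})\big)$, which is precisely the $k$-th APGD update. A short induction on $k$, with base case given by the two runs sharing the common random initialization $\bm{\delta}^{(0)}$, then shows that the APGD iterate $\bm{\delta}^{(k)}$ coincides with the iterate produced by PGD against $\bar f^{(k)}$, yielding the claim.

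The only real subtlety I anticipate is the scaling argument: one must check that the $\ell_p$ steepest-descent map $\mu_p$ is invariant under multiplication by the positive logistic factor $\bar\lambda^{(k)}$ that is unavoidably introduced by routing the gradient through a genuine BLC loss — everything else is bookkeeping. I would also remark that the freedom in choosing $\bar b^{(k)}$ reflects the fact that APGD only ever uses the \emph{normal direction} of the auxiliary classifier and never its offset, which foreshadows why the attack can be blind to the true decision geometry of the ensemble.
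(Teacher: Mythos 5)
Your proof is correct and follows essentially the same route as the paper's: compute the expected-loss gradient as $-y\sum_i\alpha_i\lambda_i\vc{w}_i$ with $\lambda_i=\sigma(-yf_i(\vc{x}+\bm{\delta}^{(k-1)}))\in(0,1)$, define $\bar f^{(k)}$ with that same weighted normal vector and arbitrary bias, invoke positive homogeneity of $\mu_p$ to absorb the extra scalar $\bar\lambda^{(k)}$, and close with induction from the shared initialization $\bm{\delta}^{(0)}$. The only cosmetic difference is that you write $\lambda_i$ via the logistic sigmoid while the paper expands it out explicitly; the arguments coincide.
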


\begin{figure}[!t]
  \centering
   \includegraphics[width=0.7\columnwidth]{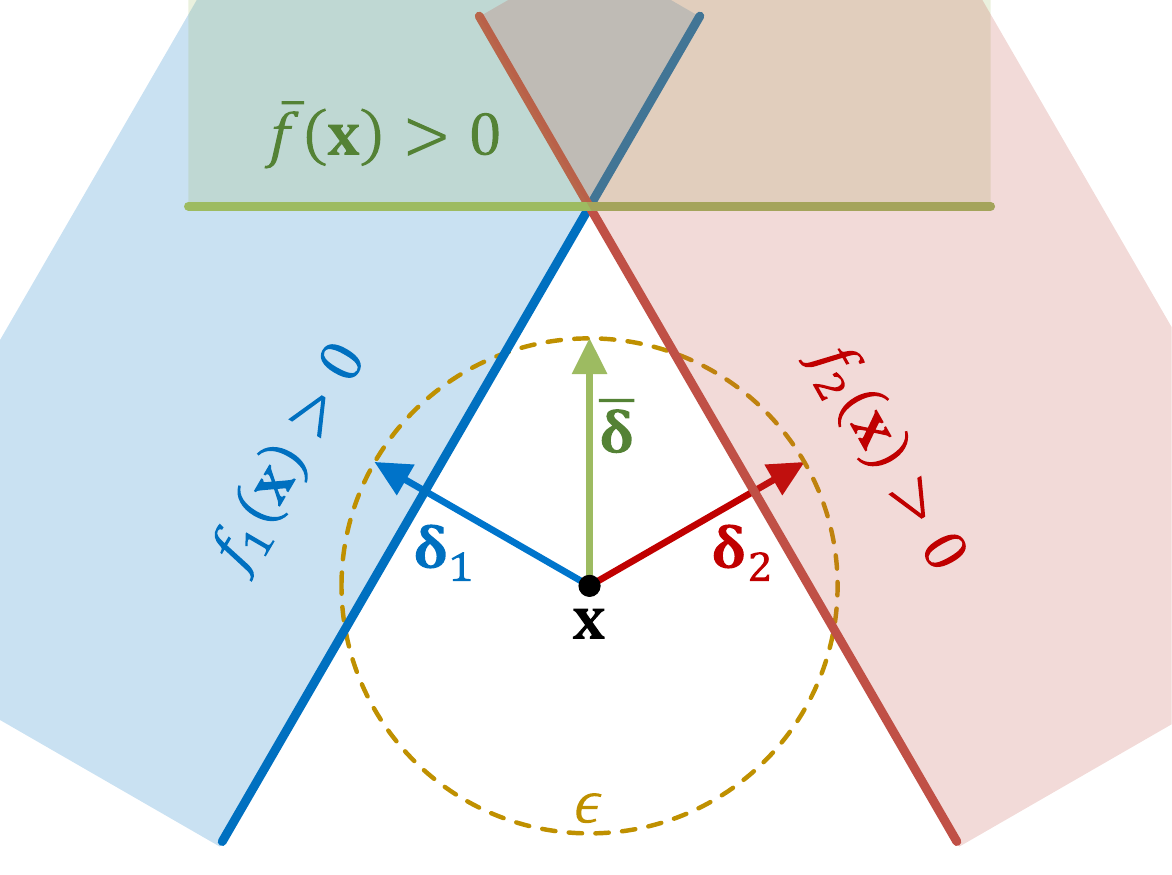}
  \caption{Illustration of an auxiliary classifier $\bar{f}$ for a randomized ensemble of two BLCs $f_1$ and $f_2$ with equiprobable sampling. The APGD output $\bar{\bm{\delta}}$ when attacking $f_1$ and $f_2$ is equivalent to the output of PGD when attacking $\bar{f}$.}\label{fig:aux}%
\end{figure}

\subsection{Inconsistency of Adaptive PGD}
Theorem~\ref{thm:auxiliary} highlights a fundamental flaw underlying APGD when applied to RECs: \emph{APGD iterates target an auxiliary classifier that may or may not exist in the REC being attacked}. For instance, Fig.~\ref{fig:aux} illustrates the case of an $M=2$ REC. Assuming equiprobable sampling, clearly choosing either of $\bm{\delta}_1$ or $\bm{\delta}_2$ will fool the ensemble half the time on average. However, due to the norm radius choice, the APGD generated perturbation cannot fool either, and thus will give an \emph{incorrect} and \emph{optimistic} estimate of the REC's robustness. 

Another implication of Theorem~\ref{thm:auxiliary} is that APGD might pre-maturely converge to a fixed-point, that is $\bm{\delta}^{(k)} = \bm{\delta}^{(k-1)}$, which can happen when the auxiliary classifier becomes ill-defined and reduces to $\bar{f}^{(k)}(\vc{x}) = \bar{b}$. Furthermore, these fundamental flaws in APGD can be exploited by ensemble training techniques to artificially boost the robustness of the randomized ensemble. In fact, prime examples of this phenomena include adapting standard ensembling techniques \cite{kariyappa2019improvingGAL, yang2021trs} that promote input gradient dissimilarity to the randomized setting, something we explore in Section~\ref{sec:results}.

One final implication of Theorem~\ref{thm:auxiliary} is the following result regarding APGD:
\begin{theorem}[Inconsistency]
\label{thm:inconsistency}
The APGD algorithm for randomized ensembles of classifiers is inconsistent.
\end{theorem}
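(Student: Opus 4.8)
The plan is to establish inconsistency by exhibiting a single explicit instance — a randomized ensemble of binary linear classifiers together with a clean data-point — on which $L(\vc{x},y,\bm{\alpha})=1$, an $\ell_p$-bounded adversarial perturbation exists, and yet APGD provably returns a non-adversarial perturbation. The conceptual lever is Theorem~\ref{thm:auxiliary}: every APGD iterate coincides with a PGD step taken against an auxiliary BLC whose weight vector is the \emph{positive} combination $\sum_{i=1}^M \alpha_i\lambda_i\vc{w}_i$ of the members' weights. Consequently APGD can only ever move $\vc{x}$ along directions that lie in the convex cone generated by $\{\vc{w}_i\}_{i=1}^M$; so I would engineer the geometry so that every $\ell_p$-bounded perturbation pointing into that cone stays on the correct side of all $M$ decision boundaries, while some other $\ell_p$-bounded perturbation crosses one of them.

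Concretely, I would take $M=2$ with equiprobable sampling $\alpha_1=\alpha_2=\tfrac12$, dimension $D=2$, $\vc{x}=\vc{0}$ and $y=1$, and the symmetric pair $\vc{w}_1=(\cos\theta,\sin\theta)$, $\vc{w}_2=(\cos\theta,-\sin\theta)$, $b_1=b_2=\beta>0$ for a fixed $\theta\in(0,\pi/2)$. Then $f_1(\vc{x})=f_2(\vc{x})=\beta>0$, so both members classify $\vc{x}$ correctly and $L(\vc{x},y,\bm{\alpha})=1$. The minimal perturbation that flips $f_1$ is $\bm{\delta}_1=-(\beta+\zeta)\vc{w}_1$ with $\pnorm{\bm{\delta}_1}{2}=\beta+\zeta$, so whenever $\epsilon>\beta$ one may choose $\zeta\in(0,\epsilon-\beta]$, making $\bm{\delta}_1$ a genuine $\ell_2$-bounded adversarial perturbation: it forces $f_1$ to misclassify and drops $L$ to $\tfrac12<1$.

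Next I would track APGD on this instance, starting from $\bm{\delta}^{(0)}=\vc{0}$. The key observation is that symmetry about the first coordinate axis is an invariant of the iteration: along any perturbation of the form $(-t,0)$ one has $f_1(\vc{x}+(-t,0))=f_2(\vc{x}+(-t,0))=\beta-t\cos\theta$, so by the form of the $\lambda_i$ for the binary cross-entropy loss we get $\lambda_1=\lambda_2$, and hence by \eqref{eq:grad-eq} the expected-loss gradient satisfies $\vc{g}\propto-(\vc{w}_1+\vc{w}_2)\propto-(1,0)$, giving $\mu_p(\vc{g})=-(1,0)$ for both $p=2$ and $p=\infty$. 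Thus APGD simply crawls along the negative first axis; its iterates stay on the segment joining $\vc{0}$ to $(-\epsilon,0)$ and converge to the fixed point $\bm{\delta}_{\mathrm{APGD}}=(-\epsilon,0)$. On that whole segment $f_i(\vc{x}+(-t,0))=\beta-t\cos\theta\ge\beta-\epsilon\cos\theta$, which is strictly positive as soon as $\epsilon<\beta/\cos\theta$, so no APGD iterate (for any number of steps $K$) ever fools either member and $L(\vc{x}+\bm{\delta}_{\mathrm{APGD}},y,\bm{\alpha})=1$. Picking any $\epsilon$ with $\beta<\epsilon<\beta/\cos\theta$ — a nonempty range precisely because $\cos\theta<1$ — contradicts Definition~\ref{def:consistent}, proving APGD inconsistent; I would then remark that an analogous axis-aligned construction covers general $p$ and general $\bm{\alpha}$.

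The step I expect to be the main obstacle is making the one-dimensional reduction of the APGD dynamics airtight: showing that the symmetry $\lambda_1=\lambda_2$ genuinely survives the clip/projection operator $\Pi_\epsilon^p$ at every step, that the iterates never leave the segment $[(-\epsilon,0),\vc{0}]$, and that no intermediate iterate accidentally crosses a boundary. A secondary but related difficulty is discharging the zero-initialization assumption: with a random start one must bound how far the initial point plus a sequence of in-cone steps can drift off the first axis, which is where a quantitative version of the ``narrow cone'' property (half-angle $\theta$ about $(1,0)$) does the work.
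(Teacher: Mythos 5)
Your construction is correct and establishes the theorem, but it follows a genuinely different route from the paper's. The paper also picks $M=2$, $\bm{\alpha}=(\tfrac12,\tfrac12)$, $(\vc{x},y)=(\vc{0},1)$, and equal biases $b_1=b_2=b>0$, but it chooses the weights \emph{antipodal}, $\vc{w}_1=-\vc{w}_2=\vc{w}$, with radius $\epsilon=2b\,\pnorm{\vc{w}}{p}/\pnorm{\vc{w}}{2}^2$. Starting from any $\bm{\delta}^{(0)}$ with $\tp{\vc{w}}\bm{\delta}^{(0)}=0$ (in particular $\bm{\delta}^{(0)}=\vc{0}$), equality of $f_1$ and $f_2$ forces $\lambda_1=\lambda_2$, and because $\vc{w}_1+\vc{w}_2=\vc{0}$ the expected-loss gradient is \emph{identically zero}, so APGD is immediately stuck at a fixed point. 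That makes the paper's argument a one-liner --- no trajectory analysis needed. Your ``wedge'' construction with $\vc{w}_1=(\cos\theta,\sin\theta)$, $\vc{w}_2=(\cos\theta,-\sin\theta)$ avoids the degenerate zero-gradient case: the gradient is nonzero but points along the bisector $-(1,0)$, which lies in neither member's normal direction, and you must additionally verify that the iterates never leave the segment $[(-\epsilon,0),\vc{0}]$ (the induction you sketch is straightforward because $\mu_p((-c,0))=(-1,0)$ for every $p\ge 1$ and the $\ell_p$ projection of an axis-aligned point is axis-aligned). Your version is arguably more illustrative of the paper's ``auxiliary classifier'' intuition --- APGD chases a classifier that isn't in the ensemble rather than merely stalling at $\mu_p(\vc{0})$ --- at the cost of a slightly longer argument. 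Both proofs share the same initialization caveat that you explicitly flag: they exhibit a specific $\bm{\delta}^{(0)}$ (the paper's condition $\tp{\vc{w}}\bm{\delta}^{(0)}=0$ is measure-zero under random initialization, yours fixes $\bm{\delta}^{(0)}=\vc{0}$), which suffices for Definition~\ref{def:consistent} but is worth stating explicitly. Finally, your closing remark about ``general $\bm{\alpha}$'' needs a small adjustment --- with unequal $\alpha_i$ the symmetric wedge no longer keeps $\alpha_1\vc{w}_1+\alpha_2\vc{w}_2$ on the axis, so you would have to rescale $\sin\theta$ asymmetrically --- but this is immaterial since a single counterexample proves the theorem.
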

Theorem~\ref{thm:inconsistency} provides further theoretical justification of the shortcomings of APGD when attacking randomized ensembles. The inconsistency of APGD leads to overly optimistic robustness evaluation in case of DNNs (see Fig.~\ref{fig:motivation}).

\section{The ARC Algorithm}\label{sec:arc}
To properly assess the robustness of randomized ensembles, and avoid the pitfalls of APGD as seen in Section~\ref{sec:pgd}, we present a novel attack algorithm for solving \eqref{eq:opt}, namely the Attacking Randomized ensembles of Classifiers (ARC) algorithm. 

\subsection{Binary Linear Classifiers}
Analogous to the APGD analysis in Section~\ref{sec:pgd}, we first present a simpler version of ARC (Algorithm~\ref{alg:arc-blc}) tailored for randomized ensembles of BLCs.
\begin{algorithm}[tb]
   \caption{The ARC Algorithm for BLCs}
   \label{alg:arc-blc}
\begin{algorithmic}[1]
   \STATE {\bfseries Input:} REC $(\calF,\bm{\alpha})$, labeled data-point $(\vc{x},y)$, norm $p$, and radius $\epsilon$.
   \STATE {\bfseries Output:} Adversarial perturbation $\bm{\delta}$ such that $\pnorm{\bm{\delta}}{p}\leq \epsilon$.
   \STATE Initialize $\bm{\delta} \leftarrow \vc{0}$, $v \leftarrow L(\vc{x},y,\bm{\alpha})$ , $q \leftarrow \frac{p}{p-1}$
   \STATE Define $\calI$ such that $\alpha_{i} \geq \alpha_{j}$ $\forall i,j\in \calI$ and $i\leq j$. 

   \FOR{$i \in \calI$}
   \CCOMMENT{optimal unit $\ell_p$ norm adversarial direction for $f_i$} 
   \STATE $\vc{g} \leftarrow -y \frac{|\vc{w}_i|^{q-1} \odot \text{sgn}(\vc{w}_i)}{\norm{\vc{w}_i}_q^{q-1}}$ 
   \CCOMMENT{shortest $\ell_p$ distance between $\vc{x}$ and $f_i$}
   \STATE $\zeta \leftarrow \frac{|f_i(\vc{x})|}{\pnorm{\vc{w}_i}{q}}$ 
   \IF{$\zeta\geq \epsilon$ $\lor$ $i=1$}
   \STATE $\beta \leftarrow \epsilon$
   \ELSE 
    \STATE $\beta \leftarrow \frac{\epsilon}{\epsilon-\zeta}\left|\frac{y\tp{\vc{w}_i}\bm{\delta}}{\pnorm{\vc{w}_i}{q}} + \zeta \right| + \rho$ 

   \ENDIF
   \STATE       $\hat{\bm{\delta}} \leftarrow \epsilon\frac{\bm{\delta} + \beta\vc{g}}{\pnorm{\bm{\delta} + \beta\vc{g}}{p}}$ \COMMENT{candidate $\hat{\bm{\delta}}$ such that $\pnorm{\hat{\bm{\delta}}}{p}=\epsilon$}
   \STATE       $\hat{v} \leftarrow L(\vc{x}+\hat{\bm{\delta}},y,\bm{\alpha})$
   \CCOMMENT{if robustness does not increase, update $\bm{\delta}$}
   \IF{$\hat{v} \leq v$}
   \STATE $\bm{\delta} \leftarrow \hat{\bm{\delta}}$, $v \leftarrow \hat{v}$
   \ENDIF
   \ENDFOR
\end{algorithmic}
\end{algorithm}

In this setting, the ARC algorithm iterates over all members of the ensemble \emph{once} in a greedy fashion. At iteration $i \in [M]$, a candidate perturbation $\hat{\bm{\delta}}$ is obtained by updating the perturbation $\bm{\delta}$ from the previous perturbation as follows:
\begin{equation} \label{eq:arc-update}
    \hat{\bm{\delta}} = \gamma\left(\bm{\delta}  + \beta \vc{g}\right)
\end{equation}
where $\gamma >0$ such that $\pnorm{\hat{\bm{\delta}}}{p}=\epsilon$, $\vc{g}$ is the unit $\ell_p$ norm optimal perturbation \cite{melachrinoudis1997analytical} for fooling classifier $f_i$, and $\beta$ is adaptively chosen based on lines (10)-(13) to guarantee that $\hat{\bm{\delta}}$ can fool $f_i$, irrespective of $\bm{\delta}$, as long as the shortest $\ell_p$ distance between $f_i$ and $\vc{x}$ is less than the norm radius $\epsilon$. Subsequently, the ARC algorithm updates its iterate $\bm{\delta}$ with $\hat{\bm{\delta}}$ as long as the average classification accuracy satisfies:
\begin{equation}
    L(\vc{x}+\hat{\bm{\delta}},y,\bm{\alpha}) \leq L(\vc{x}+\bm{\delta},y,\bm{\alpha})
\end{equation}
that is $\hat{\bm{\delta}}$ does not \textit{increase} the robust accuracy. Note that $\rho>0$ in line (13) is an arbitrarily small positive constant to avoid boundary conditions. In our implementation, we set $\rho = 0.05\epsilon$.

In contrast to APGD, the ARC algorithm is provably consistent as stated below:
\begin{theorem}[Consistency]
\label{thm:consistency}
The ARC algorithm for randomized ensembles of binary linear classifiers is consistent.
\end{theorem}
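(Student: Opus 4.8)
The plan is to unpack Definition~\ref{def:consistent} and argue directly about the run of Algorithm~\ref{alg:arc-blc}. So assume $L(\vc{x},y,\bm{\alpha})=1$ — every binary linear classifier classifies $\vc{x}$ correctly — and that some $\bm{\delta}$ with $\pnorm{\bm{\delta}}{p}\le\epsilon$ achieves $L(\vc{x}+\bm{\delta},y,\bm{\alpha})<1$; the goal is to show the perturbation returned by ARC is adversarial for $(\vc{x},y)$. Two bookkeeping observations reduce the problem. First, ARC maintains the invariant $v=L(\vc{x}+\bm{\delta},y,\bm{\alpha})$ and overwrites $(\bm{\delta},v)$ by $(\hat{\bm{\delta}},\hat{v})$ only when $\hat{v}\le v$; hence $v$ is nonincreasing over the run, and the moment it drops below $1$ the current (and, by monotonicity, the final) iterate is already adversarial. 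So it suffices to exhibit a single iteration at which an accepted candidate has $\hat{v}<1$. Second, with $q=p/(p-1)$ and $\zeta_i=|f_i(\vc{x})|/\pnorm{\vc{w}_i}{q}$ denoting the $\ell_p$ distance from $\vc{x}$ to the decision boundary of $f_i$, a one-line Hölder estimate shows that if $\zeta_i\ge\epsilon$ for all $i$ then every classifier keeps its (correct) label on the whole $\epsilon$-ball, so $L(\vc{x}+\bm{\delta}',y,\bm{\alpha})=1$ for all $\pnorm{\bm{\delta}'}{p}\le\epsilon$ — up to the non-generic borderline case $\zeta_i=\epsilon$, absorbed by the tie-breaking convention in the decision rule. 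Contrapositively, the existence of a norm-bounded adversarial forces $\zeta_{i_0}<\epsilon$ for at least one member; fix such an $i_0$.

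The technical core is the following claim: when ARC processes $f_{i_0}$, the candidate $\hat{\bm{\delta}}=\gamma(\bm{\delta}+\beta\vc{g})$, with $\gamma=\epsilon/\pnorm{\bm{\delta}+\beta\vc{g}}{p}$, satisfies $yf_{i_0}(\vc{x}+\hat{\bm{\delta}})<0$ \emph{no matter what the incoming iterate $\bm{\delta}$ is} (it only needs $\pnorm{\bm{\delta}}{p}\le\epsilon$, which holds for every ARC iterate by construction). To prove it I would substitute $\hat{\bm{\delta}}$ into $f_{i_0}$ and use the two closed-form identities $\tp{\vc{w}}_{i_0}\vc{g}=-y\pnorm{\vc{w}_{i_0}}{q}$ and $\pnorm{\vc{g}}{p}=1$ — both immediate from the formula for $\vc{g}$ together with the equality case of Hölder's inequality — which collapse $yf_{i_0}(\vc{x}+\hat{\bm{\delta}})<0$ to the scalar inequality $\gamma(\beta-s)>\zeta_{i_0}$, where $s=y\tp{\vc{w}}_{i_0}\bm{\delta}/\pnorm{\vc{w}_{i_0}}{q}$ and $|s|\le\pnorm{\bm{\delta}}{p}\le\epsilon$ by Hölder. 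The triangle inequality gives $\gamma\ge\epsilon/(\epsilon+\beta)$, and since $\beta-s>0$ (which one checks from the formula for $\beta$), a couple of lines of algebra rewrite the target $\gamma(\beta-s)>\zeta_{i_0}$ as $\beta>\epsilon(s+\zeta_{i_0})/(\epsilon-\zeta_{i_0})$ — which is exactly what the ELSE-branch choice $\beta=\frac{\epsilon}{\epsilon-\zeta_{i_0}}|s+\zeta_{i_0}|+\rho$ guarantees, the slack $\rho>0$ supplying the strictness and also ruling out $\bm{\delta}+\beta\vc{g}=\vc{0}$. The remaining sub-case, $i_0=1$, is trivial: then $\bm{\delta}=\vc{0}$ and $\beta=\epsilon$, so $\hat{\bm{\delta}}=\epsilon\vc{g}$ and $yf_{i_0}(\vc{x}+\hat{\bm{\delta}})=\pnorm{\vc{w}_{i_0}}{q}(\zeta_{i_0}-\epsilon)<0$.

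Assembling: consider the state of ARC just before it processes $f_{i_0}$ in the loop over $\calI$ — note the greedy $\alpha$-ordering is irrelevant here, since every member is visited exactly once. If $v<1$ at that point, the final output is already adversarial by the monotonicity observation. Otherwise $v=1$, and the claim says the $i_0$-th candidate $\hat{\bm{\delta}}$ fools $f_{i_0}$; since $\alpha_{i_0}>0$, this gives $\hat{v}=L(\vc{x}+\hat{\bm{\delta}},y,\bm{\alpha})\le 1-\alpha_{i_0}<1=v$, so $\hat{\bm{\delta}}$ is accepted and $v$ drops below $1$ permanently. Either way ARC returns $\bm{\delta}$ with $L(\vc{x}+\bm{\delta},y,\bm{\alpha})<1=L(\vc{x},y,\bm{\alpha})$, i.e.\ an adversarial perturbation, which is consistency. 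I expect the one genuinely delicate step to be the $\beta$-claim: because the renormalization factor $\gamma$ depends nonlinearly on both $\beta$ and the arbitrary incoming $\bm{\delta}$, one must verify that the closed-form $\beta$ dominates the worst-case alignment of $\bm{\delta}$ with $\vc{w}_{i_0}$ uniformly over the $\epsilon$-ball; the attendant nuisances — division by $\pnorm{\bm{\delta}+\beta\vc{g}}{p}$ near zero, the strict-versus-nonstrict decision rule, the boundary case $\zeta_{i_0}=\epsilon$ — are precisely what the $\rho$-slack and the $i=1$ / $\zeta\ge\epsilon$ branches absorb.
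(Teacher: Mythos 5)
Your proposal is correct and follows essentially the same route as the paper's proof: both exploit monotonicity of $v$ to reduce to exhibiting one iteration where the candidate is accepted with $\hat v<1$, both use a Hölder argument to extract an index $i_0$ with $\zeta_{i_0}<\epsilon$ (you phrase it contrapositively, the paper invokes Lemma~\ref{lem:fool} directly, but the content is identical), and both verify via the identities $\tp{\vc{w}}_{i_0}\vc{g}=-y\pnorm{\vc{w}_{i_0}}{q}$, $\pnorm{\vc{g}}{p}=1$ and the triangle-inequality bound $\gamma\ge\epsilon/(\epsilon+\beta)$ that the prescribed $\beta$ forces the candidate to fool $f_{i_0}$, with the same $i=1$ sub-case. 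The one cosmetic improvement is that you state the core claim uniformly over every feasible incoming iterate $\pnorm{\bm{\delta}}{p}\le\epsilon$ rather than conditioning on $v^{(i_0-1)}=1$ as the paper does, which makes the assembly step slightly tidier but is the same argument.
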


The implication of Theorem~\ref{thm:consistency} (see Appendix~\ref{app:proof} for proof) is that the perturbation generated via the ARC algorithm  is \textit{guaranteed} to be adversarial to $(\calF,\bm{\alpha})$, given that $\vc{x}$ is correctly classified by all members provided such a perturbation exists. Thus, we have constructed a theoretically-sound algorithm for attacking randomized ensembles of binary linear classifiers.

\subsection{Differentiable Multiclass Classifiers}
We now extend the ARC algorithm presented in Algorithm~\ref{alg:arc-blc} to the more general case of $C$-ary differentiable classifiers, such as DNNs. When dealing with a non-linear differentiable classifier $f$, we can \say{linearize} its behavior around an input $\vc{x}$ in order to provide \emph{estimates} of both the shortest $\ell_p$ distance to the decision boundary $\tilde{\zeta}$, and the corresponding unit $\ell_p$ norm direction $\tilde{\vc{g}}$. Using a first-order Taylor series expansion at $\vc{x}$ we approximate $f$:
\begin{equation}
   f(\vc{u}) \approx \tilde{f}(\vc{u}) = f(\vc{x}) + \tp{\nabla f(\vc{x})}(\vc{u}-\vc{x})  
\end{equation}
where $\nabla f(\vc{x}) = \left[\nabla\left[f(\vc{x})\right]_1| ...| \nabla\left[f(\vc{x})\right]_C\right] \in \reals^{D\times C}$ is the Jacobian matrix. Let $m \in [C]$ be the assigned label to $\vc{x}$ by $f$, we can construct the approximate decision region $\tilde{\calR}_m(\vc{x})$ around $\vc{x}$ as follows:

\begin{align}
\begin{split}
     \tilde{\calR}_m(\vc{x}) &=  \bigcap_{\substack{j=1 \\ j \neq m}}^C \left\{ \vc{u} \in \reals^D: \left[\tilde{f}(\vc{u})\right]_m > \left[\tilde{f}(\vc{u})\right]_j\right\} \\
     &=  \bigcap_{\substack{j=1 \\ j \neq m}}^C \left\{ \vc{u} \in \reals^D: \tp{\tilde{\vc{w}}_j}\left(\vc{u}-\vc{x}\right) + \tilde{h}_j> 0\right\}
\end{split}
\end{align}
where $\tilde{h}_j = \left[f(\vc{x})\right]_m - \left[f(\vc{x})\right]_j$ and $\tilde{\vc{w}}_j = \nabla\left[f(\vc{x})\right]_m - \nabla\left[f(\vc{x})\right]_j$  $\forall j \in [C]\setminus\{m\}$. The decision boundary of $\tilde{\calR}_m(\vc{x})$ is captured via the $C-1$ hyper-planes $\calH_j$ defined by:
\begin{equation}
\calH_j = \left\{\vc{u}\in \reals^D: \tp{\tilde{\vc{w}}_j}\left(\vc{u}-\vc{x}\right) + \tilde{h}_j= 0 \right\}
\end{equation}
Therefore, in order to obtain $\tilde{\zeta}$ and $\tilde{\vc{g}}$, we find the closest hyper-plane $\calH_n$:
\begin{equation}\label{eq:arc-search}
    n= \argmin_{{j\in[C]\setminus \{m\}}} \frac{\left|\tilde{h}_j\right|}{\pnorm{\tilde{\vc{w}}_j}{q}}  = \argmin_{{j\in[C]\setminus \{m\}}} \tilde{\zeta}_j
\end{equation}
and then compute:
\begin{equation}
 \tilde{\zeta} = \tilde{\zeta}_n \ \ \ \ \& \ \ \ \  \tilde{\vc{g}} = -\frac{|\tilde{\vc{w}}_n|^{q-1} \odot \text{sgn}(\tilde{\vc{w}}_n)}{\norm{\tilde{\vc{w}}_n}_q^{q-1}}  
\end{equation}
where $q\geq 1$ is the dual norm of $p$, $\odot$ is the Hadamard product, and $\vc{m} = |\vc{w}|^r$ denotes the element-wise operation $m_i = |w_i|^r$ $\forall i\in[D]$. Using these equations, we generalize Algorithm~\ref{alg:arc-blc} for BLCs to obtain ARC in Algorithm~\ref{alg:arc}. The ARC algorithm is iterative. At each iteration $k \in [K]$, ARC performs a local linearization approximation. Due to this approximation, we limit the local perturbation radius to $\eta$, a hyper-paramater often referred to as the step size akin to PGD.

\subsection{Practical Considerations} \label{ssec:efficient}
The ARC algorithm is extremely efficient to implement in practice. Automatic differentiation packages such as PyTorch \cite{paszke2017automatic} can carry out the linearization procedures in a seamless fashion. However, datasets with large number of classes $C$, e.g., CIFAR-100 and ImageNet, can pose a practical challenge for Algorithm~\ref{alg:arc}. This is due to the $\mathcal{O}(C)$ search performed in \eqref{eq:arc-search}, which recurs for each iteration $k$ and classifier $f_i$ in the ensemble. However, we have observed that \eqref{eq:arc-search} can be efficiently and accurately approximated by limiting the search only to a fixed subset of hyper-planes of size $G \in [C-1]$, e.g., $G=4$ reduces the evaluation time by more than $\sim 14\times$ (for CIFAR-100) without compromising on accuracy. We shall use this version of ARC for evaluating such datasets, and refer the reader to Appendix~\ref{app:arclite} for further details.
\begin{algorithm}[!t]
   \caption{The ARC Algorithm}
   
   \label{alg:arc}
\begin{algorithmic}[1]
   \STATE {\bfseries Input:} REC $(\calF,\bm{\alpha})$, labeled data-point $(\vc{x},y)$, number of steps $K$, step size $\eta$, norm $p$, and radius $\epsilon$.
   \STATE {\bfseries Output:} Adversarial perturbation $\bm{\delta}$ such that $\pnorm{\bm{\delta}}{p}\leq \epsilon$.
   \STATE Initialize $\bm{\delta} \leftarrow \vc{0}$, $v \leftarrow L(\vc{x},y,\bm{\alpha})$ , $q \leftarrow \frac{p}{p-1}$
   \STATE Define $\calI$ such that $\alpha_{i} \geq \alpha_{j}$ $\forall i,j\in \calI$ and $i\leq j$. 
   \FOR{$k \in [K]$}
   \STATE $\bm{\delta}_l \leftarrow \vc{0}$, $v_l \leftarrow v$ \COMMENT{initialize for local search}
   \FOR{$i \in \calI$}
   \STATE $\tilde{\vc{x}} \leftarrow \vc{x} +\bm{\delta}+\bm{\delta}_l $
    
    \CCOMMENT{the label assigned to $\tilde{\vc{x}}$ by $f_i$}  
   \STATE $m\leftarrow \argmax_{j\in [C]} \left[f_i(\tilde{\vc{x}})\right]_j$

   \STATE $\tilde{\vc{w}}_j \leftarrow \nabla\left[f_i(\tilde{\vc{x}})\right]_m - \nabla\left[f_i(\tilde{\vc{x}})\right]_j$ \hfill$\forall j\in[C]\setminus\{m\}$
   \STATE $\tilde{h}_j \leftarrow \left[f_i(\tilde{\vc{x}})\right]_m - \left[f_i(\tilde{\vc{x}})\right]_j$\hfill$\forall j\in[C]\setminus\{m\}$
    \CCOMMENT{get closest hyper-plane to $\tilde{\vc{x}}$ when $f_i$ is linearized}   
   \STATE  $n\leftarrow \argmin_{{j\in[C]\setminus \{m\}}} \frac{\left|\tilde{h}_j\right|}{\pnorm{\tilde{\vc{w}}_j}{q}}$
   
   \STATE $\vc{g} \leftarrow -\frac{|\tilde{\vc{w}}_n|^{q-1} \odot \text{sgn}(\tilde{\vc{w}}_n)}{\norm{\tilde{\vc{w}}_n}_q^{q-1}}$ 
   \STATE $\zeta \leftarrow \frac{\left|\tilde{h}_n\right|}{\pnorm{\tilde{\vc{w}}_n}{q}}$
   \IF{$\zeta \geq \eta \lor i=1$}
        \STATE $\beta \leftarrow \eta$
    \ELSE
        \STATE $\beta \leftarrow \frac{\eta}{\eta-\zeta}\left|\frac{\tp{\tilde{\vc{w}}_n}\bm{\delta}_l}{\pnorm{\tilde{\vc{w}}_n}{q}} + \zeta \right| + \rho$
    \ENDIF
   \STATE       $\hat{\bm{\delta}}_l \leftarrow \eta\frac{\bm{\delta}_l + \beta\vc{g}}{\pnorm{\bm{\delta}_l + \beta\vc{g}}{p}}$ 
   \CCOMMENT{project onto the $\ell_p$ ball of radius $\epsilon$ and center $\vc{x}$}
   \STATE $\hat{\bm{\delta}} \leftarrow \Pi_\epsilon^p \left( \bm{\delta} + \hat{\bm{\delta}}_l\right)$, $\hat{v}_l \leftarrow L(\vc{x}+\hat{\bm{\delta}},y,\bm{\alpha})$
   \IF{$\hat{v}_l \leq v_l$}
   \STATE $\bm{\delta}_l \leftarrow \hat{\bm{\delta}}_l$, $v_l \leftarrow \hat{v}_l$ \COMMENT{update the local variables}
   \ENDIF
   \ENDFOR
   \CCOMMENT{update the global variables after localized search}
   \STATE         $\hat{\bm{\delta}} \leftarrow \Pi_\epsilon^p\left(\bm{\delta} + \bm{\delta}_l\right)$, $\hat{v} \leftarrow L(\vc{x}+\hat{\bm{\delta}},y,\bm{\alpha})$
   \IF{$\hat{v} \leq v$}
   \STATE $\bm{\delta} \leftarrow \hat{\bm{\delta}}$, $v \leftarrow \hat{v}$
   \ENDIF
   \ENDFOR
   
\end{algorithmic}
\end{algorithm}

\section{Experiments} \label{sec:results}
\subsection{Setup}
We conduct comprehensive experiments to demonstrate the effectiveness of ARC in generating norm-bounded adversarial perturbations for RECs, as compared to APGD. Specifically, we experiment with a variety of network architectures (VGG-16 \cite{simonyan2014very}, ResNet-20, ResNet-18 \cite{he2016deep}, WideResNet-28-4 \cite{zagoruyko2016wide}, and MobileNetV1 \cite{howard2017mobilenets}), datasets (SVHN \cite{svhn}, CIFAR-10 \cite{cifar10}, CIFAR-100, and ImageNet \cite{krizhevsky2012imagenet}), and norms ($\ell_2$ and $\ell_\infty$). We use the tried and tested publicly available implementation of PGD from \cite{rice2020overfitting}. For all our experiments, the same $\ell_p$ norm is used during both training and evaluation. Further details on the training/evaluation setup can be found in Appendix~\ref{app:setup}.

For BAT \cite{pinot2020randomization} comparisons, we first train a standard AT model using PGD \cite{rice2020overfitting}, which serves both as an independent robust baseline as well as the first model ($f_1$) in the REC to be constructed. Following the BAT procedure, we then train a second model ($f_2$) using the adversarial samples of the \emph{first} model only. The same network architecture will be used for both models. When evaluating the robust accuracy of an REC, we compute the true expectation via \eqref{eq:acc} in accordance with \cite{pinot2020randomization}.


\subsection{Results}

\begin{table}[!t]
\centering
\caption{Comparison between ARC and adaptive PGD when attacking randomized ensembles trained via BAT \cite{pinot2020randomization} across various network architectures and norms on the CIFAR-10 dataset. We use the standard radii $\epsilon_2 = 128/255$ and $\epsilon_\infty = 8/255$ for $\ell_2$ and $\ell_\infty$-bounded perturbations, respectively.} \label{tab:networks}
\vskip 0.15in
\begin{sc}
\resizebox{1\columnwidth}{!}{%
\begin{tabular}{l c   c c c  r }

\toprule
\multirow{3}{*}{Network}   & \multirow{3}{*}{Norm}   & \multicolumn{4}{c}{Robust Accuracy [\%]} \\
    & & AT ($M=1$) & \multicolumn{3}{c}{REC ($M=2$)} \\
     & & PGD & APGD & ARC & Diff\\
\midrule
\multirow{2}{*}{ResNet-20}   & $\ell_2$    & $62.43$ & $69.21$  & $55.44$ & \bfu{$-13.77$}\\
&  $\ell_\infty$    & $45.66$ &$61.10$  & $40.71$ & \bfu{$-20.39$} \\
\midrule
\multirow{2}{*}{MobileNetV1}   & $\ell_2$    & $66.39$ & $67.92$  & $59.43$ & \bfu{$-8.49$}\\
&  $\ell_\infty$    & $49.23$ & $59.27$   & $44.59$ & \bfu{$-14.68$} \\
\midrule 
\multirow{2}{*}{VGG-16}   & $\ell_2$     & $66.08$ & $66.96$  & $59.20$ & \bfu{$-7.76$}\\
&  $\ell_\infty$    & $49.02$ & $57.82$  & $42.93$ & \bfu{$-14.89$} \\
\midrule
\multirow{2}{*}{ResNet-18}  &$\ell_2$   & $69.16$ & $70.16$  & $65.88$ & \bfu{$-4.28$}\\
&  $\ell_\infty$   & $51.73$ & $61.61$  & $47.43$ & \bfu{$-14.18$} \\
\midrule

\multirow{2}{*}{WideResNet-28-4}   & $\ell_2$     & $69.91$ & $71.48$  & $62.95$ & \bfu{$-8.53$}\\
&  $\ell_\infty$     & $51.88$ & $63.86$  & $48.65$ & \bfu{$-15.21$} \\
\bottomrule

\end{tabular}}
\end{sc}
\end{table}

We first showcase the 
effectiveness of ARC on CIFAR-10 using five network architectures and both $\ell_2$ and $\ell_\infty$ norms. Table~\ref{tab:networks} summarizes the robust accuracies for various models and adversaries. Following the approach of \cite{pinot2020randomization}, we find the optimal sampling probability $\bm{\alpha} = (\alpha, 1-\alpha)$ that maximizes the robust accuracy via a grid search using APGD. We report the corresponding robust accuracies using both APGD and ARC for the same optimal sampling probability, to ensure a fair comparison. We consistently find that $\alpha^*\approx 0.9$, that is the REC heavily favors the robust model ($f_1$). 

Table~\ref{tab:networks} provides \emph{overwhelming} evidence that: 1) BAT trained RECs perform significantly better than their respective AT counterparts, when APGD is used for robustness evaluation, 2) employing ARC instead of APGD results in a massive drop in robust accuracy of the \emph{same} REC, and 3) the \emph{true robust accuracy of the REC} (obtained via ARC) \emph{is worse than that of the AT baseline $M=1$}. For example, a randomized ensemble of ResNet-20s achieves $61.1\%$ robust accuracy when using APGD, a $\sim 15\%$ increase over the AT baseline. However, when using ARC, we find that the robustness of the ensemble is in fact $40.7\%$ which is: 1) much lower ($\sim 20\%$) than what APGD claims and 2) also lower ($\sim 5\%$) than the AT baseline.

The conclusions of Table~\ref{tab:networks} are further corroborated by Table~\ref{tab:datasets}, where we now compare ARC and APGD across four datasets. These empirical results illustrate both the validity and usefulness of our theoretical results, as they convincingly show that APGD is indeed ill-suited for evaluating the robustness of RECs and can provide a \emph{false} sense of robustness.

\begin{table}[!t]
\centering
\caption{Comparison between ARC and adaptive PGD when attacking randomized ensembles trained via BAT \cite{pinot2020randomization} across various datasets and norms. We use ResNet-18 for ImageNet and and ResNet-20 for SVHN, CIFAR-10, and CIFAR-100 datasets.} \label{tab:datasets}
\vskip 0.15in
\begin{sc}
\resizebox{1\columnwidth}{!}{%
\begin{tabular}{l c r  c c c  r}

\toprule
\multirow{3}{*}{Dataset}  & \multirow{3}{*}{Norm}  & \multirow{3}{*}{Radius ($\epsilon$) }  & \multicolumn{4}{c}{Robust Accuracy [\%]} \\
 &   & & AT ($M=1$) & \multicolumn{3}{c}{REC ($M=2$)} \\
  &   & & PGD & APGD & ARC & Diff\\
\midrule
\multirow{2}{*}{SVHN}   &  $\ell_2$ & $128/255$   & $68.35$ & $74.66$  & $60.15$ & \bfu{$-14.51$}\\
& $\ell_\infty$ & $8/255$    & $53.55$ & $65.99$  & $52.01$ & \bfu{$-13.98$} \\
\midrule
\multirow{2}{*}{CIFAR-10}   &  $\ell_2$ & $128/255$    & $62.43$ & $69.21$  & $55.44$ & \bfu{$-13.77$}\\
& $\ell_\infty$ & $8/255$   & $45.66$ & $61.10$  & $40.71$ & \bfu{$-20.39$} \\
\midrule
\multirow{2}{*}{CIFAR-100}   &  $\ell_2$ & $128/255$    & $34.60$ & $41.91$  & $28.92$ & \bfu{$-12.99$}\\
& $\ell_\infty$ & $8/255$    & $22.29$ & $33.37$  & $17.45$ & \bfu{$-15.92$} \\
\midrule
 \multirow{2}{*}{ImageNet} &  $\ell_2$ & $128/255$    & $47.61$ & $49.62$  & $42.09$ & \bfu{$-7.53$}  \\
 &  $\ell_\infty$ & $4/255$    & $24.33$ & $35.92$  & $19.54$ & \bfu{$-16.38$} \\
\bottomrule

\end{tabular}}
\end{sc}
\end{table}

\begin{figure*}[t]
  \centering
    \subfloat[]{\includegraphics[height=4cm]{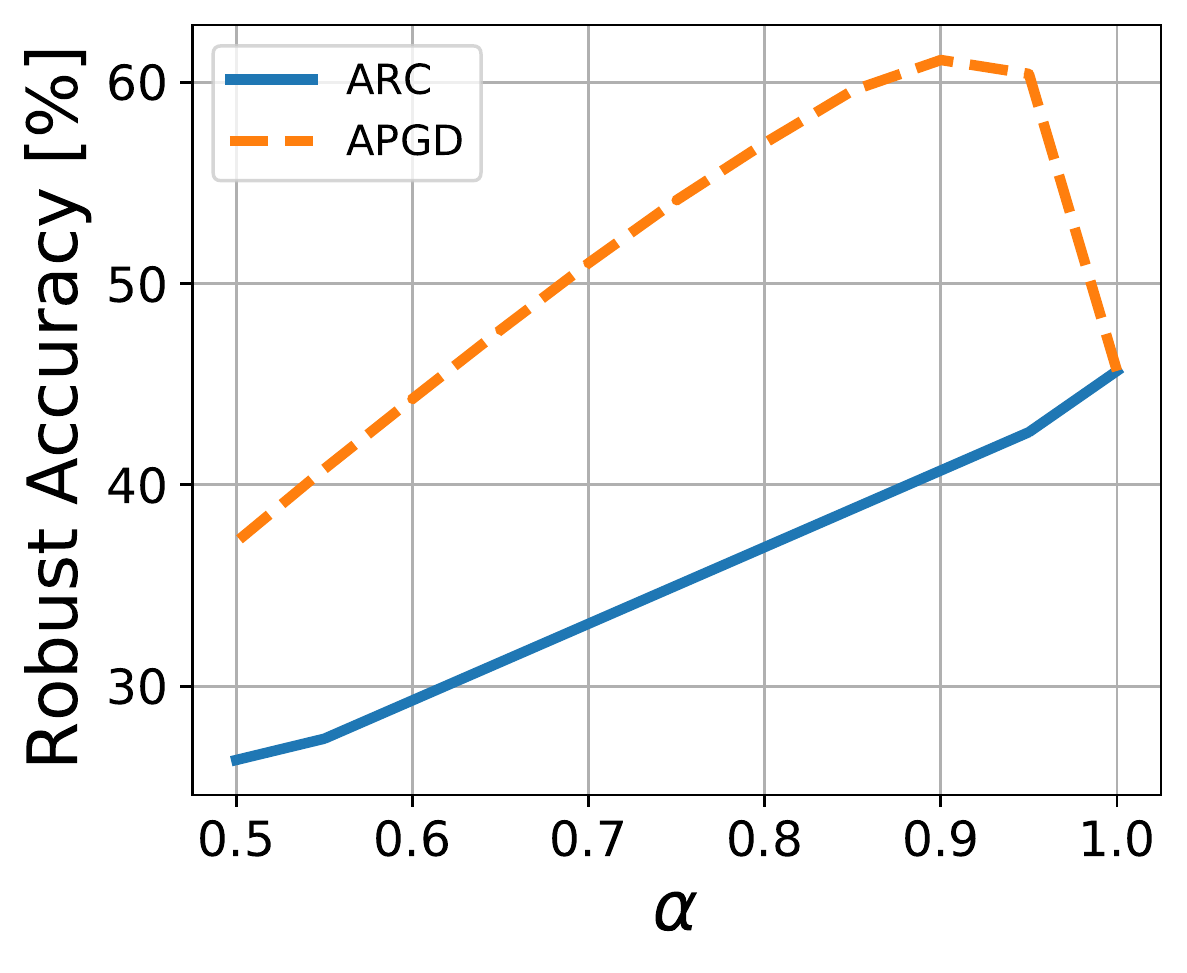}\label{fig:comp-alpha}}%
    \qquad%
    \subfloat[]{\includegraphics[height=4cm]{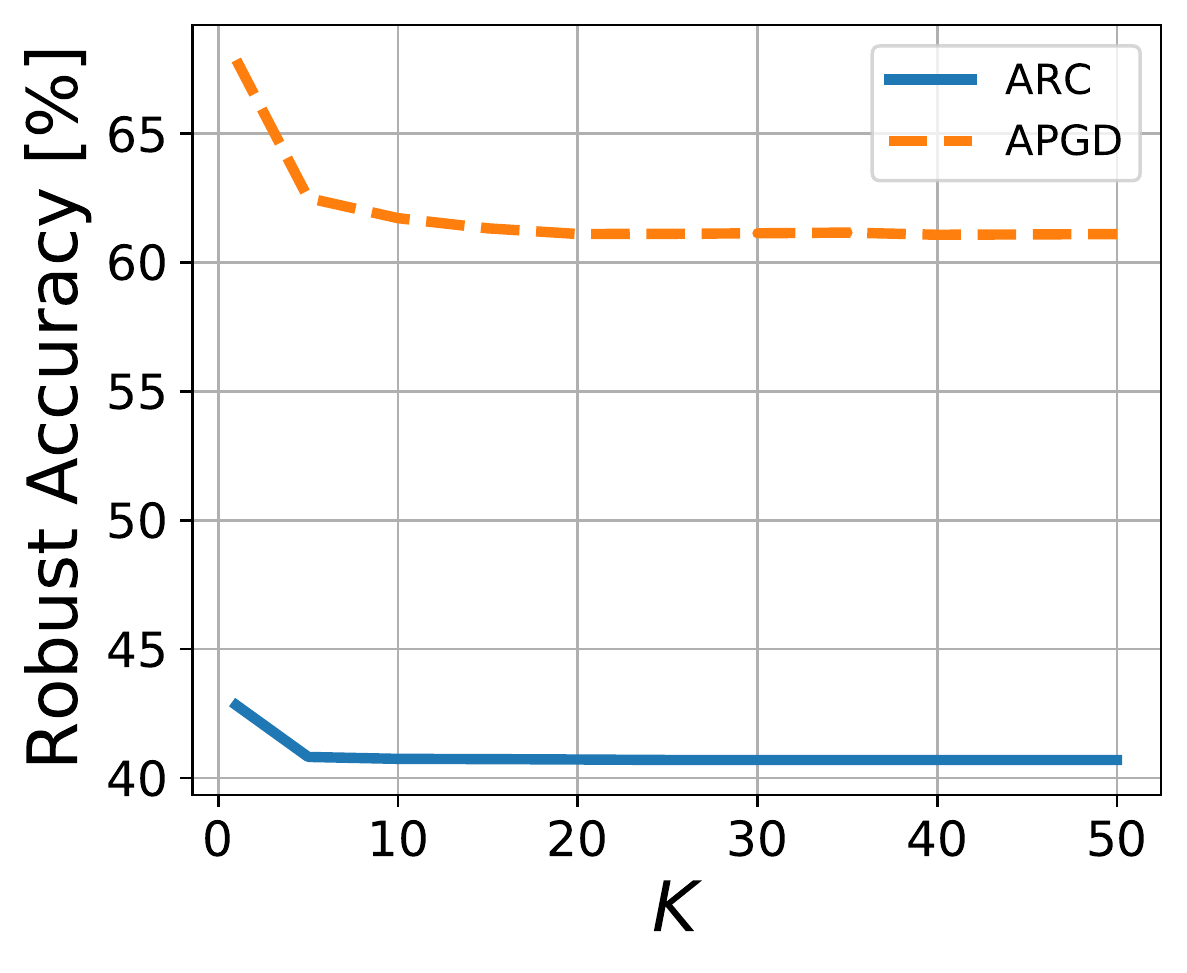}\label{fig:comp-k}}%
    \qquad%
    \subfloat[]{\includegraphics[height=4cm]{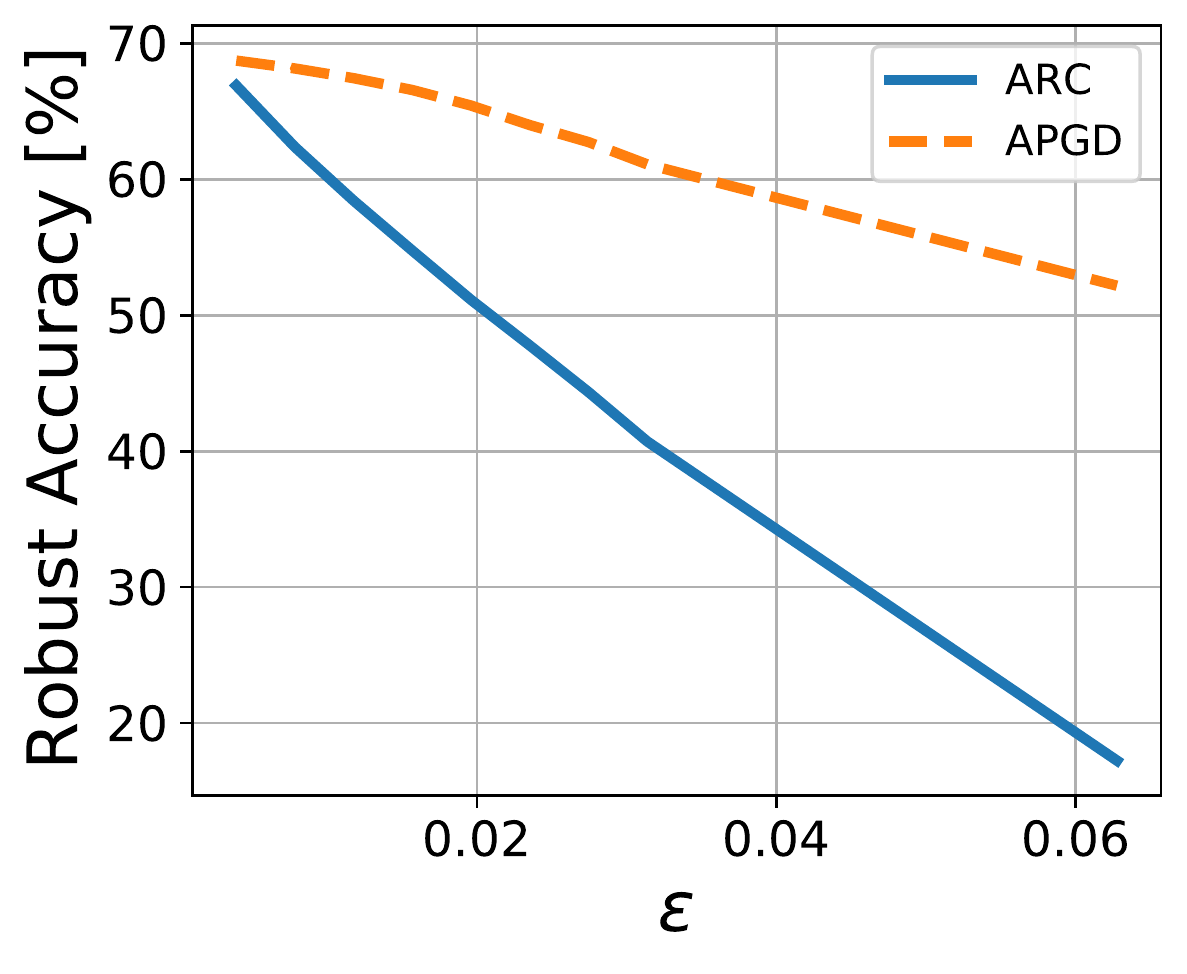}\label{fig:comp-epsilon}}
  \caption{Comparison between ARC and APGD using an REC of ResNet-20s obtained via $\ell_\infty$ BAT ($M=2$) on CIFAR-10. Robust accuracy vs.: (a) sampling probability $\alpha$, (b) number of iterations $K$, and (c) norm radius $\epsilon$.}
  \label{fig:comp}
\end{figure*}

\subsection{Robustness Stress Tests} \label{ssec:tests}
In this section, we conduct robustness stress tests to further ensure that our claims are in fact well-founded and not due to a technicality.   

\textbf{Parameter Sweeps}: We sweep the sampling probability (Fig.~\ref{fig:comp-alpha}), number of iterations (Fig.~\ref{fig:comp-k}), and norm radius (Fig.~\ref{fig:comp-epsilon}) for an REC of two ResNet-20s obtained via $\ell_\infty$ BAT on the CIFAR-10 dataset. For similar experiments across more norms and datasets, please refer to Appendix~\ref{app:sweeps}. In all cases, we find that the ARC adversary is consistently a much stronger attack for RECs than APGD. Moreover, we observe that $\alpha=1$ always results in the highest ensemble robustness when properly evaluated with ARC. This renders BAT RECs obsolete since the best policy is to choose the deterministic AT trained model $f_1$.

\textbf{Stronger PGD}: We also experiment with stronger versions of PGD using random initialization (R) \cite{madry2018towards} and 10 random restarts (RR) \cite{maini2020adversarial}. Table~\ref{tab:stress} reports the corresponding robust accuracies. Interestingly enough, we find that APGD+RR is actually slightly less effective than vanilla APGD and APGD+R. While this observation might seem counter-intuitive at first, it can be reconciled using our analysis from Section~\ref{sec:pgd}. By design, we have that the expected loss associated with APGD+RR perturbations is guaranteed to be larger than that of APGD+R\footnote{Assuming they share the same random seed.}. However, we have established (see Fig.~\ref{fig:aux}) that the perturbation that maximizes the expected loss can perform worse than other sub-optimal perturbations, which provides further evidence that APGD is indeed fundamentally flawed, and that ARC is better suited for evaluating the robustness of RECs.

\begin{table}[t]
\centering
\caption{Robust accuracy of different versions of APGD compared to ARC for ResNet-20 RECs on CIFAR-10. We use the standard radii $\epsilon_2 = 128/255$ and $\epsilon_\infty = 8/255$ for $\ell_2$ and $\ell_\infty$-bounded perturbations, respectively.}
\label{tab:stress}
\vskip 0.1in
\begin{sc}
\resizebox{0.8\columnwidth}{!}{%
\begin{tabular}{l c c }

\toprule
\multirow{2}{*}{Attack Method}  & \multicolumn{2}{c}{Robust Accuracy [\%]}\\
& $\ell_2$& $\ell_\infty$   \\
\midrule

APGD & $69.21$ &  $61.10$ \\
APGD + R & $69.28$ &  $62.80$ \\
APGD + RR& $70.14$ & $62.95$  \\
\midrule
ARC & $55.44$ &  $40.71$ \\

\bottomrule

\end{tabular}
}
\end{sc}
\vspace{-1em}
\end{table}

\subsection{Other Ensemble Training Methods} \label{ssec:ensemble}
As alluded to in Section~\ref{sec:background}, one can construct randomized ensembles from any pre-trained set of classifiers, regardless of the training procedure. To that end, we leverage the publicly available DVERGE \cite{yang2020dverge} trained ResNet-20 models on CIFAR-10 to construct and evaluate such RECs. 

Figure~\ref{fig:dverge-cross} plots the cross-robustness matrix of three DVERGE models. As expected, each model is quite robust ($\sim 87\%$) to other models' adversarial perturbations (obtained via PGD). However, the models are completely compromised when subjected to their own adversarial perturbations. 

\begin{figure}[!t]
  \centering
    \subfloat[]{\includegraphics[height=3.5cm]{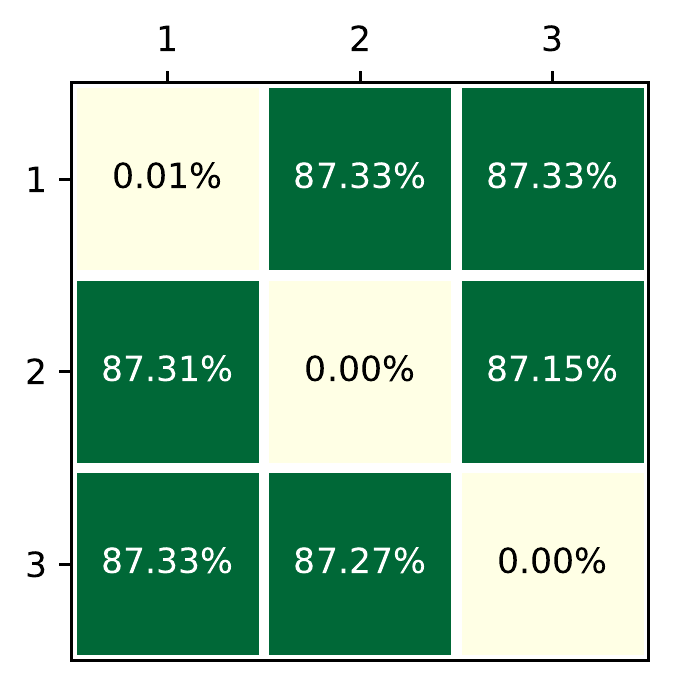}\label{fig:dverge-cross}}%
    \qquad%
    \subfloat[]{\includegraphics[height=3.5cm]{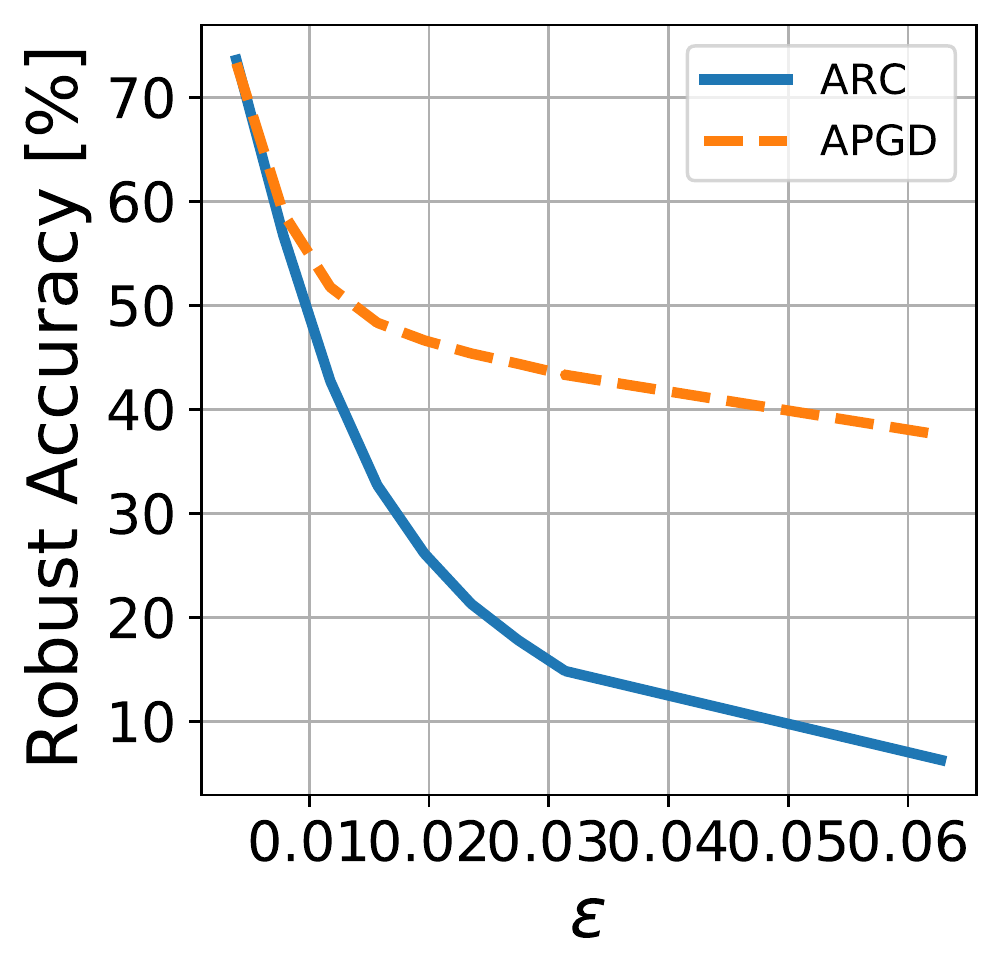}\label{fig:dverge-sweep}}%
 \label{fig:dverge}
  \caption{Robustness performance of an REC constructed from $\ell_\infty$ DVERGE trained models ($M=3$): (a) cross-robustness matrix, and (b) robust accuracy vs. norm radius $\epsilon$ using both ARC and APGD.}
\end{figure}

Due to the apparent symmetry between the models, we construct a randomized ensemble with equiprobable sampling $\alpha_i = 1/3$. We evaluate the robustness of the REC against both APGD and ARC adversaries, and sweep the norm radius $\epsilon$ in the process. As seen in Fig.~\ref{fig:dverge-sweep}, the ARC adversary is much more effective at fooling the REC, compared to APGD. For instance, for the typical radius $\epsilon=8/255$, the APGD robust accuracy is $43.36\%$, on-par with the single model AT baseline (see Table~\ref{tab:networks}). In contrast, the ARC robust accuracy is $14.88\%$, yielding a \emph{massive} difference in robustness ($\sim 28\%$). In the interest of space, we defer the ADP \cite{pang2019improvingADP} and TRS \cite{yang2021trs} experiments to Appendix~\ref{app:ensemble}. We also explore constructing RECs using independent adversarially trained models in Appendix~\ref{app:iat}.

\section{Conclusion}
We have demonstrated both theoretically and empirically that the proposed ARC algorithm is better suited for evaluating the robustness of randomized ensembles of classifiers. In contrast, current adaptive attacks provide an overly optimistic estimate of robustness. Specifically, our work successfully \emph{compromises} existing empirical defenses such as BAT \cite{pinot2020randomization}. While we also experiment with other methods for constructing RECs, we point out the paucity of work on randomized ensemble defenses.

Despite the vulnerability of existing methods, the hardware efficiency of randomized ensembles is an attractive feature. We believe that constructing robust randomized ensembles is a promising research direction, and our work advocates the need for improved defense methods including \textit{certifiable} defenses, i.e., defenses that are \emph{provably} robust against a specific adversary.

\section*{Acknowledgements}

This work was supported by the Center for Brain-Inspired Computing (C-BRIC) and the Artificial Intelligence Hardware (AIHW) program funded by the Semiconductor Research Corporation (SRC) and the Defense Advanced Research Projects Agency (DARPA).


\bibliography{ref}
\bibliographystyle{icml2022}

\newpage
\appendix
\onecolumn

\section{Proofs}\label{app:proof}
In this section, we provide proofs for Theorems \ref{thm:auxiliary}, \ref{thm:inconsistency} \& \ref{thm:consistency} as well as any omitted derivations. We first state the following result due to \cite{melachrinoudis1997analytical} on the $\ell_p$ norm projection of any point onto a hyper-plane:
\begin{lemma}[Melachrinoudis] \label{lem:lp}
For any hyper-plane $\calH = \{\vc{u}\in \reals^D: \vc{w}^{{\text{\normalfont T}}}\vc{u}+b = 0\} \subseteq \reals^D$, point $\vc{x}\in\reals^D$, and $p\geq 1$, then the $\ell_p$ norm projection $\vc{z}^*$ of $\vc{x}$ onto $\calH$ can be obtained analytically as follows:
\begin{equation}
    \vc{z}^*  =  \vc{x} + \zeta \vc{g} = \vc{x}  -\zeta\text{\normalfont sgn}(f(\vc{x}))  \frac{|\vc{w}|^{q-1} \odot \text{\normalfont sgn}(\vc{w})}{\norm{\vc{w}}_q^{q-1}}
\end{equation}
where
\begin{equation}
    \zeta = \min_{\vc{z}\in \calH} \pnorm{\vc{z}-\vc{x}}{p} = \pnorm{\vc{z}^*-\vc{x}}{p} = \frac{\left|\vc{w}^{{\text{\normalfont T}}}\vc{x}+b\right|}{\pnorm{\vc{w}}{q}} = \frac{\left|f(\vc{x})\right|}{\pnorm{\vc{w}}{q}}
\end{equation}

is the shortest $\ell_p$ distance between $\vc{x}$ and $\calH$, $q \geq 1$ satisfying $\frac{1}{q} + \frac{1}{p}=1$ defines the dual norm of $p$, $\odot$ is the Hadamard product, $\vc{g}$ is the unit $\ell_p$ norm vector from $\vc{x}$ towards the decision boundary, and $\vc{m} = |\vc{w}|^r$ denotes the element-wise operation $m_i = |w_i|^r$ $\forall i\in[D]$.
\end{lemma}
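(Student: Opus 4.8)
The plan is to recast the projection as the constrained optimization problem
\begin{equation}
\min_{\bm{\delta}\in\reals^D} \pnorm{\bm{\delta}}{p} \quad \text{s.t.} \quad \tp{\vc{w}}(\vc{x}+\bm{\delta}) + b = 0,
\end{equation}
i.e.\ $\tp{\vc{w}}\bm{\delta} = -f(\vc{x})$, where $\bm{\delta}=\vc{z}-\vc{x}$. Assuming $\vc{w}\neq\vc{0}$, the feasible set is a nonempty affine subspace, so a minimizer exists. The workhorse is H\"older's inequality: for every feasible $\bm{\delta}$,
\begin{equation}
|f(\vc{x})| = |\tp{\vc{w}}\bm{\delta}| \leq \pnorm{\vc{w}}{q}\,\pnorm{\bm{\delta}}{p},
\end{equation}
which yields the lower bound $\pnorm{\bm{\delta}}{p}\geq |f(\vc{x})|/\pnorm{\vc{w}}{q}$, hence $\zeta\geq |f(\vc{x})|/\pnorm{\vc{w}}{q}$.

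Next I would exhibit an explicit feasible point attaining this bound, namely
\begin{equation}
\bm{\delta}^{\star} = -\frac{f(\vc{x})}{\pnorm{\vc{w}}{q}^{q}}\,|\vc{w}|^{q-1}\odot\text{sgn}(\vc{w}),
\end{equation}
and verify two things. First, feasibility: since $w_i\,|w_i|^{q-1}\text{sgn}(w_i) = |w_i|^{q}$, we get $\tp{\vc{w}}\bm{\delta}^{\star} = -\frac{f(\vc{x})}{\pnorm{\vc{w}}{q}^{q}}\sum_i|w_i|^{q} = -f(\vc{x})$. Second, the norm: using the dual-exponent identities (both equivalent to $\tfrac1p+\tfrac1q=1$) $p(q-1)=q$ and $q/p=q-1$,
\begin{equation}
\pnorm{\bm{\delta}^{\star}}{p}^{p} = \frac{|f(\vc{x})|^{p}}{\pnorm{\vc{w}}{q}^{pq}}\sum_i|w_i|^{p(q-1)} = \frac{|f(\vc{x})|^{p}}{\pnorm{\vc{w}}{q}^{pq}}\,\pnorm{\vc{w}}{q}^{q},
\end{equation}
so $\pnorm{\bm{\delta}^{\star}}{p} = |f(\vc{x})|/\pnorm{\vc{w}}{q}$, matching the lower bound; hence $\bm{\delta}^{\star}$ is optimal and $\zeta = |f(\vc{x})|/\pnorm{\vc{w}}{q}$. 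It then remains to rewrite $\bm{\delta}^{\star} = -\zeta\,\text{sgn}(f(\vc{x}))\,\frac{|\vc{w}|^{q-1}\odot\text{sgn}(\vc{w})}{\pnorm{\vc{w}}{q}^{q-1}} = \zeta\vc{g}$, to check $\pnorm{\vc{g}}{p}^{p} = \sum_i|w_i|^{p(q-1)}/\pnorm{\vc{w}}{q}^{p(q-1)} = \pnorm{\vc{w}}{q}^{q}/\pnorm{\vc{w}}{q}^{q} = 1$ so that $\vc{g}$ is the claimed unit $\ell_p$ direction, and to set $\vc{z}^{\star} = \vc{x}+\bm{\delta}^{\star}$.

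I would close with uniqueness and endpoint remarks. For $1<p<\infty$ the $\ell_p$ norm is strictly convex on the affine constraint set, so the minimizer is unique and equals $\vc{z}^{\star}$ (uniqueness is not strictly needed for the lemma). The case $p=\infty$ (so $q=1$) follows from the same argument via $\ell_1$--$\ell_\infty$ duality, reading the displayed formula with $|\vc{w}|^{q-1}=\vc{1}$ so that $\vc{g}=-\text{sgn}(f(\vc{x}))\,\text{sgn}(\vc{w})$; the case $p=1$ (so $q=\infty$) is the delicate one, where the steepest-descent direction degenerates to a coordinate vector $\vc{e}_{j}$ with $j\in\argmax_i|w_i|$ and optimality is checked directly. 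The main obstacle is purely bookkeeping: correctly pinning down the equality case of H\"older's inequality---that $|\delta_i|^{p}$ be proportional to $|w_i|^{q}$ with signs chosen so every product $w_i\delta_i$ has sign $-\text{sgn}(f(\vc{x}))$---and then carrying the dual-exponent algebra through cleanly; there is no conceptual difficulty once the candidate $\bm{\delta}^{\star}$ is written down.
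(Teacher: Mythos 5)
The paper does not prove this lemma; it merely cites it as a known result from Melachrinoudis (1997) and uses the formula downstream, so there is no in-paper proof to compare against. Your blind proof is correct and is the standard H\"older-duality argument one would expect in the cited reference: H\"older applied to the affine constraint $\tp{\vc{w}}\bm{\delta}=-f(\vc{x})$ gives the lower bound $\pnorm{\bm{\delta}}{p}\geq |f(\vc{x})|/\pnorm{\vc{w}}{q}$, and your candidate $\bm{\delta}^{\star}$ attains it, with the dual-exponent identities $p(q-1)=q$ and $q(p-1)=p$ handled correctly in both the norm check for $\bm{\delta}^{\star}$ and the verification that $\pnorm{\vc{g}}{p}=1$. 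Your endpoint remarks are also apt: for $1<p<\infty$ strict convexity of $\pnorm{\cdot}{p}$ on the affine constraint set gives uniqueness; $p=\infty$ ($q=1$) reads off the formula cleanly since $|\vc{w}|^{q-1}=\vc{1}$ and $\pnorm{\vc{w}}{q}^{q-1}=1$; and $p=1$ ($q=\infty$) genuinely requires a limiting interpretation of $|\vc{w}|^{q-1}/\pnorm{\vc{w}}{q}^{q-1}$, which concentrates on $\argmax_i|w_i|$ and loses uniqueness under ties --- the lemma's closed form should be read in that limiting sense there. In short, the proposal is a correct, self-contained derivation of a result the paper takes on citation.
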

 The only dependence $\vc{g}$ has on $\vc{x}$ is the sign of $f(\vc{x})$, which essentially determines which side of the hyper-plane $\vc{x}$ lies in. The nice thing about Lemma~\ref{lem:lp} is that it provides us with the tools for computing the optimal adversarial perturbations for linear classifiers, for all valid norms $p\geq 1$.
\subsection{Fooling a Binary Linear Classifier}
We also state and prove a simple and useful result for fooling binary linear classifiers. Let $(\vc{x},y) \in \reals^D\times \{-1,1\}$ be a labeled data point that is correctly classified by $f$. That is we have $y(\tp{\vc{w}}\vc{x}+b) >0$. 
Let $\bm{\delta} \in \reals^D$ such that $\tilde{\vc{x}} = \vc{x} + \bm{\delta}$ is misclassified by $f$. That is, $y(\tp{\vc{w}}\tilde{\vc{x}}+b) <0$. What can we say about $\bm{\delta}$? 
\begin{lemma}\label{lem:fool}
For any $(\vc{x},y) \in \reals^D\times\{-1,1\}$ correctly classified by $f$ and valid norm $p\geq 1$, $\bm{\delta}$ is adversarial to $f$ at $(\vc{x},y)$ if and only if the following conditions hold
\begin{equation}
    -y\vc{w}^{\text{\normalfont T}}\vc{\bm{\delta}}>0  \ \ \ \ \ \text{\normalfont \& }\ \ \ \    \frac{\left|\vc{w}^{\text{\normalfont T}}\vc{\bm{\delta}}\right|}{\pnorm{\vc{w}}{q}} > \zeta
\end{equation}
where 
\begin{equation}
    \zeta = \frac{|f(\vc{x})|}{\pnorm{\vc{w}}{q}} = y\frac{f(\vc{x})}{\pnorm{\vc{w}}{q}}
\end{equation}
is the shortest $\ell_p$ distance between $\vc{x}$ and the decision boundary of $f$.
\end{lemma}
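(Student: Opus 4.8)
The plan is to prove Lemma~\ref{lem:fool} by a direct algebraic manipulation of the two defining inequalities, $y(\tp{\vc{w}}\vc{x}+b) > 0$ (correct classification of $\vc{x}$) and $y(\tp{\vc{w}}\tilde{\vc{x}}+b) < 0$ (misclassification of $\tilde{\vc{x}} = \vc{x} + \bm{\delta}$), and showing this pair is equivalent to the stated conditions. The key observation is that $f$ is affine, so $f(\vc{x}+\bm{\delta}) = f(\vc{x}) + \tp{\vc{w}}\bm{\delta}$, which reduces everything to statements about the scalars $f(\vc{x})$ and $\tp{\vc{w}}\bm{\delta}$.

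First I would rewrite the misclassification condition: $0 > y f(\vc{x}+\bm{\delta}) = y f(\vc{x}) + y\tp{\vc{w}}\bm{\delta}$. Since $\vc{x}$ is correctly classified, $y f(\vc{x}) = |f(\vc{x})| > 0$, so this becomes $y\tp{\vc{w}}\bm{\delta} < -|f(\vc{x})| < 0$, i.e.\ $-y\tp{\vc{w}}\bm{\delta} > |f(\vc{x})| > 0$. This immediately gives the first condition $-y\tp{\vc{w}}\bm{\delta} > 0$. For the second condition, note that from $-y\tp{\vc{w}}\bm{\delta} > 0$ we have $|\tp{\vc{w}}\bm{\delta}| = -y\tp{\vc{w}}\bm{\delta}$ (using $y^2 = 1$), so $-y\tp{\vc{w}}\bm{\delta} > |f(\vc{x})|$ is the same as $|\tp{\vc{w}}\bm{\delta}| > |f(\vc{x})|$; dividing both sides by $\pnorm{\vc{w}}{q} > 0$ yields $|\tp{\vc{w}}\bm{\delta}|/\pnorm{\vc{w}}{q} > |f(\vc{x})|/\pnorm{\vc{w}}{q} = \zeta$. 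For the converse, I would run the same chain backwards: given $-y\tp{\vc{w}}\bm{\delta} > 0$ we again have $|\tp{\vc{w}}\bm{\delta}| = -y\tp{\vc{w}}\bm{\delta}$, and the second condition then gives $-y\tp{\vc{w}}\bm{\delta} > |f(\vc{x})| = yf(\vc{x})$, so $yf(\vc{x}+\bm{\delta}) = yf(\vc{x}) + y\tp{\vc{w}}\bm{\delta} < 0$, which is exactly misclassification; since $\vc{x}$ itself is correctly classified, $\bm{\delta}$ is adversarial by Definition~\ref{def:adversarial} restricted to this single classifier.

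I would also include the short remark that $\zeta = |f(\vc{x})|/\pnorm{\vc{w}}{q} = y f(\vc{x})/\pnorm{\vc{w}}{q}$ follows because $y f(\vc{x}) = |f(\vc{x})|$ under correct classification, and note that this matches the shortest-distance formula from Lemma~\ref{lem:lp} applied to the decision boundary $\calH = \{\vc{u}: \tp{\vc{w}}\vc{u} + b = 0\}$ of $f$.

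Honestly, I do not expect a genuine obstacle here: the whole lemma is a one-line consequence of affinity plus the identity $|t| = -yt$ whenever $yt < 0$ and $y \in \{-1,1\}$. The only thing to be careful about is bookkeeping the sign of $y$ consistently and making sure the biconditional is argued in both directions rather than just one; the "adversarial" direction requires invoking that $\vc{x}$ is correctly classified so that the accuracy genuinely drops (from $1$ to $0$ for this single BLC), which is given by hypothesis.
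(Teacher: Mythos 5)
Your proof is correct and takes essentially the same route as the paper: expand $yf(\vc{x}+\bm{\delta}) = yf(\vc{x}) + y\tp{\vc{w}}\bm{\delta}$ by affinity, use $yf(\vc{x})>0$ from correct classification, and read off the two stated conditions (the paper leaves $y(\tp{\vc{w}}\vc{x}+b)$ on one side rather than rewriting it as $|f(\vc{x})|$, but this is a cosmetic difference). You are a bit more explicit than the paper about separating the two implication directions and about why $|\tp{\vc{w}}\bm{\delta}|=-y\tp{\vc{w}}\bm{\delta}$, which is fine; the paper's single biconditional chain already encodes both.
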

\begin{proof}
We can write:
\begin{equation}
    y(\tp{\vc{w}}\tilde{\vc{x}}+b) = y(\tp{\vc{w}}\vc{x}+b) + y\tp{\vc{w}}\bm{\delta}
\end{equation}
Since $y(\tp{\vc{w}}\vc{x}+b)>0$ holds by assumption, $y(\tp{\vc{w}}\tilde{\vc{x}}+b)$ is negative if and only if $-y\tp{\vc{w}}\bm{\delta} > y(\tp{\vc{w}}\vc{x}+b)>0$. This implies we need $\bm{\delta}$ to have a positive projection on $-y\vc{w}$. Dividing both sides by $\pnorm{\vc{w}}{q}$, establishes the second condition as well.
\end{proof}

\subsection{Derivation of \eqref{eq:grad-eq}}

Let $(\calF, \bm{\alpha})$ be an REC of $M$ BLCs:
\begin{equation}
    f_i(\vc{x}) = \tp{\vc{w}}_i\vc{x} + b_i \ \ \ \forall i \in [M]
\end{equation}
where $\vc{w}_i \in \reals^D$ and $b_i \in \reals$ are the weight and bias, respectively, of the $i^{th}$ BLC. The BLC $f_i$ assigns label $1$ to $\vc{x}$ if and only if $f_i(\vc{x})>0$, and $-1$ otherwise. 

Define the binary cross-entropy loss $l$ for classifier $f$ evaluated at $(\vc{x},y)$:
\begin{align}
\begin{split}
     l(\vc{x},y) &= l\left(f(\vc{x}),y\right) =  -\log\left(\frac{e^{f(\vc{x})}}{1+e^{f(\vc{x})}}\right)\left(\frac{1+y}{2}\right) -\log\left(\frac{1}{1+e^{f(\vc{x})}}\right)\left(\frac{1-y}{2}\right) \\
     &= -\log\left(z\right)\tilde{y} -\log\left(1-z\right)(1-\tilde{y})
\end{split}
\end{align}
where $z = \frac{e^{f(\vc{x})}}{1+e^{f(\vc{x})}}$ is the empirical probability of assigning label $1$ to $\vc{x}$ by $f$ and $\tilde{y} = \frac{1+y}{2} \in \{0,1\}$ is the binarized version of the signed label $y \in \{-1,1\}$.

\textbf{Computing the gradient of $l$}:

Let us first compute:
\begin{align}
\begin{split}
     \nabla_{\vc{x}} \log\left(z\right) &= \nabla_{\vc{x}} \log\left(e^{f(\vc{x})}\right) - \nabla_{\vc{x}} \log\left(1 + e^{f(\vc{x})}\right)\\
     &=\nabla_{\vc{x}} f(\vc{x}) - \frac{1}{1 + e^{f(\vc{x})}}\nabla_{\vc{x}}  e^{f(\vc{x})} \\
     &= \vc{w} - \frac{e^{f(\vc{x})}}{1 + e^{f(\vc{x})}}\vc{w} = (1-z)\vc{w}
\end{split}
\end{align}
Similarly, we compute:
\begin{align}
\begin{split}
     \nabla_{\vc{x}} \log\left(1-z\right) &= \vc{0} -\nabla_{\vc{x}} \log\left(1 + e^{f(\vc{x})}\right)\\
     &=  - \frac{1}{1 + e^{f(\vc{x})}}\nabla_{\vc{x}}  e^{f(\vc{x})} \\
     &= - \frac{e^{f(\vc{x})}}{1 + e^{f(\vc{x})}}\vc{w} = -z\vc{w}
\end{split}
\end{align}
The gradient of $l(\vc{x},y)$ w.r.t. $\vc{x}$ can thus be computed:
\begin{align}
\begin{split}
     \nabla_{\vc{x}} l(\vc{x},y) 
     &= -\nabla_{\vc{x}}  \log\left(z\right)\tilde{y} -\nabla_{\vc{x}} \log\left(1-z\right)(1-\tilde{y}) \\
     &= -(1-z)\tilde{y} \vc{w}+ z(1-\tilde{y})\vc{w} \\
     &= (-(1-z)\tilde{y} + z(1-\tilde{y}))\vc{w} \\
     &= (-\tilde{y} + z\tilde{y} + z - z\tilde{y})\vc{w} \\
     &= (-\tilde{y} + z)\vc{w}
\end{split}
\end{align}

Note that $z\in (0,1)$ and $\tilde{y}\in\{0,1\}$, therefore $-\tilde{y}+z$ has the same sign as $-y$. If $y =1$, then $\tilde{y} = 1$, and $-1+z \in (-1,0)$ is negative. If $y=-1$, then $\tilde{y}= 0$, and $z\in(0,1)$ is positive.
Therefore, we can write:
\begin{align} \label{eq:xntropy-grad}
\begin{split}
      \nabla_{\vc{x}} l(\vc{x},y)  
     &= -y\lambda\vc{w}
\end{split}
\end{align}
where 
\begin{align}
\begin{split}
    \lambda &= |-(1-z)\tilde{y} + z(1-\tilde{y})| = (1-z)\tilde{y} + z(1-\tilde{y}) \\
    & =\frac{1}{2}\frac{1+y + (1-y)e^{f(\vc{x})}}{1+e^{f(\vc{x})}} \in (0,1)   
\end{split}
\end{align}
\textbf{Gradient of the expected loss}:

Using \eqref{eq:xntropy-grad}, we can compute the gradient $\vc{g}$ of the expected loss $ \mean{l(\vc{x}+\bm{\delta},y)}$ employed by APGD: 
\begin{align}
\begin{split}
 \vc{g} &= \nabla_{\vc{x}} \mean{l(\vc{x}+\bm{\delta} ,y)} = \nabla_{\vc{x}} \sum_{i=1}^{M}\left[\alpha_i l_i(\vc{x}+\bm{\delta} ,y)\right] \\
  &= \sum_{i=1}^{M}\left[\alpha_i \nabla_{\vc{x}}l_i(\vc{x}+\bm{\delta} ,y)\right] = \sum_{i=1}^{M}\left[\alpha_i (-y\lambda_i\vc{w}_i)\right] \\
  &=  -y\sum_{i=1}^{M}\alpha_i\lambda_i\vc{w}_i
\end{split}
\end{align}
where $\forall i \in [M]$:
\begin{equation}
    \lambda_i = \frac{1}{2}\frac{1+y + (1-y)e^{f_i(\vc{x}+\bm{\delta})}}{1+e^{f_i(\vc{x}+\bm{\delta})}} \in (0,1)
\end{equation}

\subsection{Proof of Theorem \ref{thm:auxiliary}}
We provide a more detailed proof for Theorem~\ref{thm:auxiliary}, restated below:
\begin{theorem2}[Restated]
For any REC $(\calF,\bm{\alpha})$ consisting of BLCs and any data-point $(\vc{x},y)$, there exists a sequence of auxiliary BLCs $\{\bar{f}^{(k)}\}_{k=1}^K $ such that in the $k^{\text{th}}$ iteration, the output of APGD against the REC is equal to the output of PGD against $\bar{f}^{(k)}$.
\end{theorem2}
\begin{proof}
Let $(\calF,\bm{\alpha})$ be an arbitrary REC of BLCs: $f_i(\vc{x}) = \tp{\vc{w}}_i\vc{x}+b_i$ and $(\vc{x},y) \in \reals^D \times \{-1,1\}$ be an arbitrary labeled data point. Recall the APGD update rule is:
\begin{align}\label{eq:apgd-aux}
\begin{split}
  \bm{\delta}^{(k)} &= \Pi_\epsilon^p \left( \bm{\delta}^{(k-1)} + \eta \mu_p\left(\nabla_{\vc{x}} \mean{l(\vc{x}+\bm{\delta}^{(k-1)} ,y)}\right)\right)  \\
  &= \Pi_\epsilon^p \left( \bm{\delta}^{(k-1)} + \eta \mu_p\left(\nabla_{\vc{x}} \sum_{i=1}^{M}\left[\alpha_i l_i(\vc{x}+\bm{\delta}^{(k-1)} ,y)\right]\right)\right) \\
  &= \Pi_\epsilon^p \left( \bm{\delta}^{(k-1)} + \eta \mu_p\left( \sum_{i=1}^{M}\left[\alpha_i \nabla_{\vc{x}}l_i(\vc{x}+\bm{\delta}^{(k-1)} ,y)\right]\right)\right) \\
  &= \Pi_\epsilon^p \left( \bm{\delta}^{(k-1)} + \eta \mu_p\left( \sum_{i=1}^{M}\left[\alpha_i \vc{g}_i^{(k-1)}\right]\right)\right)
\end{split}
\end{align}
where $l_i(\vc{x},y) =l(f_i(\vc{x}),y)$ is the binary cross-entropy loss evaluated at $f_i(\vc{x})$. We can compute the gradients $\vc{g}_i^{(k-1)}$ analytically $\forall i \in[M], \forall k\in [K]$:
\begin{equation}
    \vc{g}_i^{(k-1)} = -y \lambda_i^{(k-1)}  \vc{w}_i
\end{equation}
where 
\begin{equation}
    \lambda_i^{(k-1)} = \frac{1}{2}\frac{1+y + (1-y)e^{f_i(\vc{x}+\bm{\delta}^{(k-1)})}}{1+e^{f_i(\vc{x}+\bm{\delta}^{(k-1)})}} \in (0,1)
\end{equation}
We construct the $k^{\text{th}}$ auxiliary classifier as follows:
\begin{equation}
    \bar{f}^{(k)} = \tp{\bar{\vc{w}}_k}\vc{x}+\bar{b}_k = \tp{\left(\sum_{i=1}^M \alpha_i  \lambda_i^{(k-1)}  \vc{w}_i\right)}\vc{x}+\bar{b}_k
\end{equation}
for some arbitrary $\bar{b}_k\in \reals$. We now proceed to prove the result via induction on $k$, assuming both the APGD and PGD algorithms share the same random initialization $\bm{\delta}^{(0)}$. Let $\{\bar{\bm{\delta}}^{(k)}\}$ be the sequence of PGD outputs when attacking the auxiliary classifier  $\bar{f}^{(k)}$.
Assume that $\bar{\bm{\delta}}^{(k-1)} = \bm{\delta}^{(k-1)}$ for some $k$, i.e., the perturbation generated by APGD applied to the REC is equal to the perturbation generated by PGD applied to the auxiliary classifier. Thus, the $k^{\text{th}}$ PGD output will be:
\begin{align}\label{eq:pgd-aux}
\begin{split}
  \bar{\bm{\delta}}^{(k)}  &= \Pi_\epsilon^p \left( \bar{\bm{\delta}}^{(k-1)} + \eta \mu_p\left( \nabla_{\vc{x}}l\left(\bar{f}^{(k)}(\vc{x}+ \bar{\bm{\delta}}^{(k-1)}) ,y\right)\right)\right) \\
  &= \Pi_\epsilon^p \left(  \bar{\bm{\delta}}^{(k-1)} + \eta \mu_p\left( -y \bar{\lambda}^{(k)}\bar{\vc{w}}_k\right)\right) \\
  &= \Pi_\epsilon^p \left(  \bar{\bm{\delta}}^{(k-1)} + \eta \mu_p\left( -y\bar{\vc{w}}_k\right)\right) \\
  &= \Pi_\epsilon^p \left(  \bar{\bm{\delta}}^{(k-1)} + \eta \mu_p\left( -y\left[\sum_{i=1}^M \alpha_i  \lambda_i^{(k-1)}  \vc{w}_i\right]\right)\right) \\
  &= \Pi_\epsilon^p \left( \bm{\delta}^{(k-1)} + \eta \mu_p\left( \sum_{i=1}^{M}\left[\alpha_i \vc{g}_i^{(k-1)}\right]\right)\right) \\
  &= \bm{\delta}^{(k)}
\end{split}
\end{align}

which is equal to the APGD output in \eqref{eq:apgd-aux}. Note that in \eqref{eq:pgd-aux} we use the following property:
\begin{equation}
    \mu_p(\gamma \vc{u}) = \text{sgn}(\gamma)\mu_p(\vc{u}) \ \ \ \ \ \forall \vc{u} \in \reals^D,  \gamma \in \reals
\end{equation}
coupled with the fact that $\bar{\lambda}^{(k)} \in (0,1)$.

Therefore, we have established the inductive step $\forall k$. Since both algorithms start from the same random initialization, then we also establish the base case: $\bar{\bm{\delta}}^{(0)} = \bm{\delta}^{(0)}$.

\end{proof}

\subsection{Proof of Theorem \ref{thm:inconsistency}}
We provide a more detailed proof for Theorem~\ref{thm:inconsistency}, restated below:
\begin{theorem2}[Restated]
The APGD algorithm for randomized ensembles of classifiers is inconsistent.
\end{theorem2}
\begin{proof}
To prove inconsistency, we construct an example where the algorithm fails to generate an adversarial perturbation given one exists. Consider the following setup:
\begin{enumerate}
    \item two binary linear classifiers with $\vc{w}_1 = -\vc{w}_2 = \vc{w} \neq \vc{0}$, and $b_1 = b_2 = b>0$
    \item equiprobable sampling $\bm{\alpha} = (0.5,0.5)$
    \item $(\vc{x},y)=(\vc{0},1)$ with norm-bound $\epsilon = 2b\frac{\pnorm{\vc{w}}{p}}{\pnorm{\vc{w}}{2}^2}$
    
\end{enumerate}
It is easy to show that both BLCs correctly classify $\vc{x}$, since $f_1(\vc{x}) = f_2(\vc{x}) = b >0$, thus we have the condition $L(\vc{x},y,\bm{\alpha})=1$. We also have a trivial choice of $\bm{\delta} = \epsilon \vc{w}/\pnorm{\vc{w}}{p}$, which satisfies both necessary and sufficient conditions for fooling $f_2$ shown below:
\begin{equation}
    -\tp{\vc{w}}_2\bm{\delta} = \tp{\vc{w}}\bm{\delta} = \epsilon \frac{\tp{\vc{w}}\vc{w}}{\pnorm{\vc{w}}{p}} = \epsilon \frac{\pnorm{\vc{w}}{2}^2}{\pnorm{\vc{w}}{p}}>0\ \ \ \ \ \text{\normalfont \& }\ \ \ \  \left|\tp{\vc{w}}_2\bm{\delta}\right|/\pnorm{\vc{w}_2}{q} = \frac{\tp{\vc{w}}\bm{\delta}}{\pnorm{\vc{w}}{q}} = \epsilon\frac{\pnorm{\vc{w}}{2}^2}{\pnorm{\vc{w}}{q} \pnorm{\vc{w}}{p}} = \frac{2b}{\pnorm{\vc{w}}{q}} > \zeta_2 
\end{equation}
However, we observe that $\bm{\delta}$ does not fool $f_1$, since $-y(\tp{\vc{w}_1}\bm{\delta}) = -y(2b)<0$. Therefore, we have constructed a norm-bounded perturbation $\bm{\delta}$ such that $L(\vc{x}+\bm{\delta},y,\bm{\alpha}) = 0.5 < 1$. 

Next, we show that the APGD algorithm evaluated against this REC cannot generate a norm-bounded perturbation that is adversarial. Recall that, starting from some random $\bm{\delta}^{(0)}$ such that $\pnorm{\bm{\delta}^{(0)}}{p}\leq \epsilon$, the APGD algorithm performs the following update rule:
\begin{align}
\begin{split}
  \bm{\delta}^{(k)} &= \Pi_\epsilon^p \left( \bm{\delta}^{(k-1)} + \eta \mu_p\left(\nabla_{\vc{x}} \mean{l(\vc{x}+\bm{\delta}^{(k-1)} ,y)}\right)\right)  \\
  &= \Pi_\epsilon^p \left( \bm{\delta}^{(k-1)} + \eta \mu_p\left(\nabla_{\vc{x}} \sum_{i=1}^{2}\left[\alpha_i l_i(\vc{x}+\bm{\delta}^{(k-1)} ,y)\right]\right)\right) \\
  &= \Pi_\epsilon^p \left( \bm{\delta}^{(k-1)} + \eta \mu_p\left( \sum_{i=1}^{2}\left[\alpha_i \nabla_{\vc{x}}l_i(\vc{x}+\bm{\delta}^{(k-1)} ,y)\right]\right)\right) \\
  &= \Pi_\epsilon^p \left( \bm{\delta}^{(k-1)} + \eta \mu_p\left( \sum_{i=1}^{2}\left[\alpha_i \vc{g}_i^{(k-1)}\right]\right)\right)
\end{split}
\end{align}
where $l_i(\vc{x},y) =l(f_i(\vc{x}),y)$ is the binary cross-entropy loss evaluated for $f_i(\vc{x})$. We can compute the gradients $\vc{g}_i^{(k-1)}$ analytically $\forall i \in\{1,2\}, \forall k\in [K]$:
\begin{equation}
    \vc{g}_i^{(k-1)} = -y \lambda_i^{(k-1)}  \vc{w}_i 
\end{equation}
where 
\begin{equation}
    \lambda_i^{(k-1)} = \frac{1}{2}\frac{1+y + (1-y)e^{f_i(\vc{x}+\bm{\delta}^{(k-1)})}}{1+e^{f_i(\vc{x}+\bm{\delta}^{(k-1)})}} \in (0,1)
\end{equation}

Assume that $\bm{\delta}^{(0)}$ satisfies $\tp{\vc{w}}\bm{\delta}^{(0)} = 0$, therefore we have $f_1(\vc{x}+\bm{\delta}^{(0)}) = f_2(\vc{x}+\bm{\delta}^{(0)}) = b$. The expected gradient below is employed by APGD to obtain the next perturbation $\bm{\delta}^{(1)}$:
\begin{align}
\begin{split}
  \sum_{i=1}^{2}\left[\alpha_i \vc{g}_i^{(0)}\right] &= -y \sum_{i=1}^2\left[\alpha_i \lambda_i^{(0)}  \vc{w}_i\right]    =  -\frac{y}{2}\left( \lambda_1^{(0)}\vc{w} - \lambda_2^{(0)}\vc{w}\right) =-\frac{y}{2}\vc{w}\left( \lambda_1^{(0)} - \lambda_2^{(0)}\right) \\
  &= -\frac{y}{2}\vc{w}\left( \frac{1}{2}\frac{1+y + (1-y)e^{f_1(\vc{x}+\bm{\delta}^{(0)})}}{1+e^{f_1(\vc{x}+\bm{\delta}^{(0)})}} - \frac{1}{2}\frac{1+y + (1-y)e^{f_2(\vc{x}+\bm{\delta}^{(0)})}}{1+e^{f_2(\vc{x}+\bm{\delta}^{(0)})}}\right) \\
  &= -\frac{y}{2}\vc{w}\left( \frac{1}{2}\frac{1+y + (1-y)e^{b}}{1+e^{b}} - \frac{1}{2}\frac{1+y + (1-y)e^{b}}{1+e^{b}}\right) \\
  &= \vc{0}
\end{split}
\end{align}
This implies that the APGD algorithm is stuck in a fixed-point $\forall k$: $\bm{\delta}^{(k)} = \bm{\delta}^{(0)}$, which means the final perturbation obtained via PGD $\bm{\delta}_{\text{APGD}} = \bm{\delta}^{(0)}$ is a non-adversarial perturbation since $\tp{\vc{w}}\bm{\delta}^{(0)} = 0$. Hence, we have $L(\vc{x}+\bm{\delta}_{\text{APGD}},y,\bm{\alpha}) = L(\vc{x},y,\bm{\alpha}) = 1 $.
\end{proof}
\subsection{Proof of Theorem \ref{thm:consistency}}
We now can prove the main result, restated below:
\begin{theorem2}[Restated]
The ARC algorithm for randomized ensembles of binary linear classifiers is consistent.
\end{theorem2}
\begin{proof}
To prove consistency, we need to show that for all random BLCs $(\calF,\bm{\alpha})$, valid norms $p\geq 1$, norm-bounds $\epsilon$, and labeled data-points $(\vc{x},y)$ such that $L(\vc{x},y,\bm{\alpha})=1$, the ARC generated perturbation $\bm{\delta}_{\text{ARC}}$ will result in $L(\vc{x}+\bm{\delta}_{\text{ARC}},y,\bm{\alpha})<1$, given that $\exists \bm{\delta}$ that satisfies $\pnorm{\bm{\delta}}{p}\leq \epsilon$ and $L(\vc{x}+\bm{\delta},y,\bm{\alpha})<1$.

First, let us establish some properties of the ARC algorithm. Define $\bm{\delta}^{(i)}$ to be the perturbation vector $\bm{\delta}$ at the end of the $i^{\text{th}}$ iteration of the ARC algorithm. Similarly, define $v^{(i)}$ to be the cost function $L(\vc{x}+\bm{\delta}^{(i)},y,\bm{\alpha})$. The algorithm is initialized with $\bm{\delta}^{(0)} = \vc{0}$ and $v^{(0)} = L(\vc{x},y,\bm{\alpha})=1$. Subsequently, the algorithm terminates with $\bm{\delta}_\text{ARC} = \bm{\delta}^{(M)}$ with a final average classification $v^{(M)} = L(\vc{x}+\bm{\delta}_\text{ARC},y,\bm{\alpha})$. We note that the sequence $\{v^{(i)}\}$ is non-increasing by design, with $v^{(i)} \leq L(\vc{x}, y, \bm{\alpha})$. This implies that the algorithm will never \emph{increase} the mean classification, and if $\exists m \in [M]$ such that $v^{(m)}<v^{(m-1)}$, then the algorithm terminates with $v^{(M)}\leq v^{(m)}<1$. Therefore, to show consistency, it is enough to find one such $m\in[M]$.

Assume that $L(\vc{x},y,\bm{\alpha})=1$ and $\exists \bm{\delta}$ such that $\pnorm{\bm{\delta}}{p}\leq \epsilon$ and $L(\vc{x}+\bm{\delta},y,\bm{\alpha})<1$, then from Lemma~\ref{lem:fool} we know that $\exists m\in [M]$ such that:
\begin{equation} \label{eq:fool-cond}
    -y\vc{w}_m^{\text{\normalfont T}}\vc{\bm{\delta}}>0  \ \ \ \ \ \text{\normalfont \& }\ \ \ \    \frac{\left|\vc{w}_m^{\text{\normalfont T}}\vc{\bm{\delta}}\right|}{\pnorm{\vc{w}_m}{q}} > \zeta_m
\end{equation}
where 
\begin{equation}
    \zeta_m = \frac{|f_m(\vc{x})|}{\pnorm{\vc{w}_m}{q}} = y\frac{f_m(\vc{x})}{\pnorm{\vc{w}_m}{q}}
\end{equation}
Combining the second inequality from \eqref{eq:fool-cond} with Hölder's inequality, we get:
\begin{equation}
    \pnorm{\vc{w}_m}{q} \pnorm{\bm{\delta}}{p} \geq  \left|\vc{w}_m^{\text{T}}\vc{\bm{\delta}}\right| > \zeta_m \pnorm{\vc{w}_m}{q}
\end{equation}
which implies that $\zeta_m < \epsilon$ since $\pnorm{\bm{\delta}}{p} \leq \epsilon$. We now have two cases to consider:

\textbf{Case 1:}
If $m=1$, the candidate perturbation will be $\hat{\bm{\delta}}=\epsilon \vc{g}$, where
\begin{equation}
    \vc{g} = -y \frac{|\vc{w}_m|^{q-1} \odot \text{sgn}(\vc{w}_m)}{\norm{\vc{w}_m}_q^{q-1}}
\end{equation}
We will now show that $\hat{\bm{\delta}}$ satisfies both necessary and sufficient conditions to fool $f_m$. First, we compute the dot-product:
\begin{align}
\begin{split}
     -y\tp{\vc{w}_m}\hat{\bm{\delta}} &= -y\tp{\vc{w}_m}\left(\epsilon \vc{g}\right) = - y\epsilon\tp{\vc{w}_m}\vc{g} = \epsilon\pnorm{\vc{w}_m}{q} >0
\end{split}
\end{align}
which is strictly positive, and the last equality uses the following:
\begin{align}
    \begin{split}
        - y\tp{\vc{w}_m}\vc{g} &= \frac{(-y)^2}{\pnorm{\vc{w}_m}{q}^{q-1}} \sum_{i=1}^D |w_{m,i}|^{q-1} \text{sgn}(w_{m,i}) w_{m,i} \\
        &= \frac{1}{\pnorm{\vc{w}_m}{q}^{q-1}} \sum_{i=1}^D |w_{m,i}|^{q-1} |w_{m,i}| \\
        &= \frac{1}{\pnorm{\vc{w}_m}{q}^{q-1}} \sum_{i=1}^D |w_{m,i}|^{q} \\
        &= \frac{1}{\pnorm{\vc{w}_m}{q}^{q-1}} \pnorm{\vc{w}_m}{q}^{q} = \pnorm{\vc{w}_m}{q}
    \end{split}
\end{align}
Second, we compute the following quantity:
\begin{equation}
    \frac{\left|\tp{\vc{w}_m}\hat{\bm{\delta}}\right|}{\pnorm{\vc{w}_m}{q}} = \frac{-y\tp{\vc{w}_m}\hat{\bm{\delta}}}{\pnorm{\vc{w}_m}{q}} = \frac{\epsilon\pnorm{\vc{w}_m}{q}}{\pnorm{\vc{w}_m}{q}} = \epsilon > \zeta_m
\end{equation}
which is strictly greater than $\zeta_m$. Therefore, the candidate $\hat{\bm{\delta}}$ satisfies both necessary and sufficient conditions for fooling $f_m$.

\textbf{Case 2:} If $m>1$, then assume that up until the $(m-1)^{\text{th}}$ iteration the ARC algorithm has not found any adversarial perturbation, that is $v^{(m-1)}=1$, otherwise the algorithm will terminate with an adversarial perturbation. Let $\vc{u} = \bm{\delta}^{(m-1)}$, and by assumption we have $L(\vc{x}+\vc{u},y,\bm{\alpha})=1$. During the $m^{\text{th}}$ iteration, we have the candidate perturbation $\hat{\bm{\delta}} = \gamma(\vc{u}+ \beta \vc{g})$ where 
\begin{equation}
    \vc{g} = -y \frac{|\vc{w}_m|^{q-1} \odot \text{sgn}(\vc{w}_m)}{\norm{\vc{w}_m}_q^{q-1}}
\end{equation}
and 
\begin{equation}
    \gamma = \frac{\epsilon}{\pnorm{\vc{u} + \beta \vc{g}}{p}} >0
\end{equation}
to ensure $\pnorm{\hat{\bm{\delta}}}{p} = \epsilon$.

Since $\zeta_m < \epsilon$, the choice of $\beta$ for the $m^{\text{th}}$ iteration will be:
\begin{equation}
    \beta = \frac{\epsilon}{\epsilon-\zeta_m}\left|\frac{y\tp{\vc{w}_m}\vc{u}}{\pnorm{\vc{w}_m}{q}} + \zeta_m \right| + \rho
\end{equation}
where $\rho>0$ is a fixed arbitrarily small positive number. Note that $\pnorm{\vc{u}}{p}$ can either be $0$ or $\epsilon$ by design. If $\pnorm{\vc{u}}{p}=0$, then the same proof from case 1 follows, otherwise assume  $\pnorm{\vc{u}}{p}=\epsilon$.

We will now show that $\hat{\bm{\delta}}$ satisfies both necessary and sufficient conditions to fool $f_m$. First, we compute the dot-product:
\begin{align}
\begin{split}
     -y\tp{\vc{w}_m}\hat{\bm{\delta}} &= -y\gamma\tp{\vc{w}_m}\left(\vc{u} + \beta \vc{g}\right) = -y\gamma\tp{\vc{w}_m}\vc{u} - y\beta\gamma\tp{\vc{w}_m}\vc{g} = -y\gamma\tp{\vc{w}_m}\vc{u} +\beta\gamma\pnorm{\vc{w}_m}{q} 
\end{split}
\end{align}
Therefore, in order to satisfy the first condition in \eqref{eq:fool-cond}, we need $-y\tp{\vc{w}_m}\hat{\bm{\delta}}$ to be strictly positive, which is satisfied by any $\beta$ satisfying:
\begin{equation}
    \beta > \frac{y\tp{\vc{w}_m}\vc{u}}{\pnorm{\vc{w}_m}{q}}
\end{equation}
However, using the fact that $\epsilon>\zeta_m>0$, our choice of $\beta$ satisfies this constraint since:
\begin{equation}
    \beta = \frac{\epsilon}{\epsilon-\zeta_m}\left|\frac{y\tp{\vc{w}_m}\vc{u}}{\pnorm{\vc{w}_m}{q}} + \zeta_m \right| +\rho> \frac{\epsilon}{\epsilon-\zeta_m}\left(\frac{y\tp{\vc{w}_m}\vc{u}}{\pnorm{\vc{w}_m}{q}} + \zeta_m \right) > \frac{y\tp{\vc{w}_m}\vc{u}}{\pnorm{\vc{w}_m}{q}}
\end{equation}

Second, we compute the following quantity required for the second condition:
\begin{align}
    \begin{split}
    \frac{\left|\tp{\vc{w}_m}\hat{\bm{\delta}}\right|}{\pnorm{\vc{w}_m}{q}} = \frac{-y\tp{\vc{w}_m}\hat{\bm{\delta}}}{\pnorm{\vc{w}_m}{q}} = \frac{-y\gamma\tp{\vc{w}_m}\vc{u} +\beta\gamma\pnorm{\vc{w}_m}{q}}{\pnorm{\vc{w}_m}{q}} 
    \end{split}
\end{align}
In order to satisfy the second condition, we need:
\begin{equation}
    \frac{-y\gamma\tp{\vc{w}_m}\vc{u} +\beta\gamma\pnorm{\vc{w}_m}{q}}{\pnorm{\vc{w}_m}{q}} > \zeta_m
\end{equation}
which is equivalent to:
\begin{equation} \label{eq:beta-ineq}
    \beta > \frac{y\tp{\vc{w}_m}\vc{u}}{\pnorm{\vc{w}_m}{q}} + \frac{\zeta_m}{\gamma}
\end{equation}
Recall that:
\begin{equation} \label{eq:gamma-ineq}
    \gamma = \frac{\epsilon}{\pnorm{\vc{u}+\beta \vc{g}}{p}} \geq \frac{\epsilon}{\pnorm{\vc{u}}{p} + |\beta|\pnorm{\vc{g}}{p}} = \frac{\epsilon}{\epsilon + \beta} 
\end{equation}
where the first inequality stems from the triangle inequality of the $\ell_p$ norm, and the second equality uses the fact that $\pnorm{\vc{u}}{p}=\epsilon$, $\pnorm{\vc{g}}{p}=1$, and $\beta>0$ by design.
Using the inequality in \eqref{eq:gamma-ineq}, we can modify the inequality condition in \eqref{eq:beta-ineq} to:
\begin{equation}
    \beta >  \frac{y\tp{\vc{w}_m}\vc{u}}{\pnorm{\vc{w}_m}{q}} + \frac{\zeta_m(\epsilon+\beta)}{\epsilon}  \geq \frac{y\tp{\vc{w}_m}\vc{u}}{\pnorm{\vc{w}_m}{q}} + \frac{\zeta_m}{\gamma} 
\end{equation}
which means it is enough to require that:
\begin{equation}
    \beta\left(\frac{\epsilon - \zeta_m}{\epsilon}\right) > \frac{y\tp{\vc{w}_m}\vc{u}}{\pnorm{\vc{w}_m}{q}} + \zeta_m
\end{equation}
for the second condition to hold. It is easy to see our choice of $\beta$ satisfies this strict inequality (due to $\rho >0$). In fact, we can easily show that $\frac{y\tp{\vc{w}_m}\vc{u}}{\pnorm{\vc{w}_m}{q}} + \zeta_m$ is guaranteed to be positive in this scenario due to our assumption that $\vc{x}+\vc{u}$ is correctly classified by $f_m$, which removes the looseness of the absolute value in our choice of $\beta$. Thus we have shown that, given the assumptions, we can find a $m\in[M]$ such that $\vc{x}+\bm{\delta}^{(m)}$ is misclassified by $f_m$, which implies $v^{(m)}<v^{(m-1)}=1$, which concludes the proof that the ARC algorithm is consistent.
\end{proof}
\clearpage
\section{Experimental Setup Details}\label{app:setup}
\subsection{Training Hyper-parameters}
In this section, we provide the training details alongside the choice of hyper-parameters for all our experiments. For SVHN, CIFAR-10, and CIFAR-100 datasets, we establish strong adversarially trained baselines (via PGD) following the approach of \cite{rice2020overfitting} which utilizes early stopping to avoid robust over-fitting. As for ImageNet, we use the popular free training (FT) \cite{freetraining} method for faster and more efficient adversarial training. We employ standard data augmentation techniques (random cropping and random horizontal flips) for all models. Following standard practice, we also apply input normalization as part of the model, so that the adversary operates on physical images $\vc{x} \in [0,1]^D$. For all our experiments, the same $\ell_p$ norm is used during both training and evaluation.

For BAT training, as mentioned in Section~\ref{sec:results}, we re-use the AT models as the first model ($f_1$) in the REC, and then train a second model ($f_2$) using the adversarial samples of the \emph{first} model only. The same network architecture, training algorithm, and hyper-parameters will be used for both models. A single workstation with two NVIDIA Tesla P100 GPUs is used for running all the training experiments. Below are the hyper-parameter details for each dataset:

\textbf{SVHN}: Models are trained for $200$ epochs, using a PGD adversary with $K=7$ iterations with: $\epsilon = 8/255$ and  $\eta=2/255$ for $\ell_\infty$ AT, and  $\epsilon = 128/255$ and  $\eta=32/255$ for $\ell_2$ AT. We use stochastic gradient descent (SGD) with momentum ($0.9$), $128$ mini-batch size, and a step-wise learning rate decay set initially at $0.1$ and divided by
$10$ at epochs $100$ and $150$. We employ weight decay of $2\times 10^{-4}$.

\textbf{CIFAR-10}: Models are trained for $200$ epochs, using a PGD adversary with $K=7$ iterations with: $\epsilon = 8/255$ and  $\eta=2/255$ for $\ell_\infty$ AT, and  $\epsilon = 128/255$ and  $\eta=32/255$ for $\ell_2$ AT. We use SGD with momentum ($0.9$), $128$ mini-batch size, and a step-wise learning rate decay set initially at $0.1$ and divided by
$10$ at epochs $100$ and $150$. We employ weight decay of: $2\times 10^{-4}$ for ResNet-20 and MobileNetV1, and $5\times 10^{-4}$ for VGG-16, ResNet-18, and WideResNet-28-4.

\textbf{CIFAR-100}: Models are trained for $200$ epochs, using a PGD adversary with $K=7$ iterations with: $\epsilon = 8/255$ and  $\eta=2/255$ for $\ell_\infty$ AT, and  $\epsilon = 128/255$ and  $\eta=32/255$ for $\ell_2$ AT. We use SGD with momentum ($0.9$), $128$ mini-batch size, and a step-wise learning rate decay set initially at $0.1$ and divided by
$10$ at epochs $100$ and $150$. We employ weight decay of $2\times 10^{-4}$.

\textbf{ImageNet}: Models are trained for a total of $90$ epochs with mini-batch replay $m=4$, using: $\epsilon = 4/255$ for $\ell_\infty$ FT, and  $\epsilon = 76/255$ for $\ell_2$ FT. We use stochastic gradient descent (SGD) with momentum ($0.9$), $256$ mini-batch size and  a step-wise learning rate decay set initially at $0.1$ and decayed by $10$ every $30$ epochs. We employ weight decay of $1\times 10^{-4}$.

\subsection{Evaluation Details}
We evaluate the robustness of standard AT models using PGD \cite{madry2018towards}. As for randomized ensembles, we establish baselines for comparison using the adaptive version of PGD (APGD) \cite{pinot2020randomization} which uses the expected loss function (see Section~\ref{sec:preliminaries}). We use our proposed ARC algorithm as well, to demonstrate the vulnerability of RECs. In Appendix~\ref{app:attacks}, we investigate different methods for constructing adaptive and non-adaptive attacks using both PGD and C\&W \cite{carlini2017towards}. The PGD/APGD attack details for each dataset are listed below. The same configurations are used for ARC as well, except for $\ell_\infty$ evaluations, where we find that setting $\eta=\epsilon$ yields best results.

\textbf{SVHN, CIFAR-10, CIFAR-100}: We use $K=20$ iterations, with: $\epsilon=8/255$ with $\eta=2/255$ for $\ell_\infty$ PGD and $\epsilon=128/255$ with $\eta=32/255$ $\ell_2$ PGD. 

\textbf{ImageNet}: We use $K=50$ iterations, with: $\epsilon=4/255$ with $\eta=1/255$ for $\ell_\infty$ PGD and $\epsilon=128/255$ with $\eta=32/255$ for $\ell_2$ PGD.


\clearpage
\section{Additional Experiments and Comparisons}\label{app:extra}

\subsection{Individual Model Performance}
\begin{table}[tbhp]
\centering
\caption{Clean and robust accuracies of the individual members of
BAT trained RECs from Table~\ref{tab:datasets}. Model robustness is measured via
both PGD and ARC.} \label{tab:individual-models}
\vskip 0.15in
\begin{sc}
\resizebox{0.7\columnwidth}{!}{%
\begin{tabular}{l c c  c r r }

\toprule
\multirow{2}{*}{Dataset}  & \multirow{2}{*}{Norm}  & \multirow{2}{*}{Model }  & \multirow{2}{*}{Clean Accuracy [\%] }& \multicolumn{2}{c}{Robust Accuracy [\%]} \\
  &   & & & PGD & ARC\\
\midrule
\multirow{4}{*}{SVHN}   &  \multirow{2}{*}{$\ell_2$} & $f_1$   & $93.07$ & $68.35$  & $65.50$ \\
   &   & $f_2$   & $91.23$ & $0.00$  & $0.00$ \\
\cmidrule(lr{1em}){2-6}
  &  \multirow{2}{*}{$\ell_\infty$} & $f_1$   & $90.53$ & $53.55$  & $49.01$ \\
   &   & $f_2$   & $88.31$ & $0.00$  & $0.00$ \\
\midrule
\multirow{4}{*}{CIFAR-10}   &  \multirow{2}{*}{$\ell_2$} & $f_1$   & $83.36$ & $62.43$  & $61.13$ \\
   &   & $f_2$   & $91.26$ & $0.13$  & $0.00$ \\
\cmidrule(lr{1em}){2-6}
  &  \multirow{2}{*}{$\ell_\infty$} & $f_1$   & $75.96$ & $45.66$  & $42.36$ \\
   &   & $f_2$   & $77.64$ & $0.00$  & $0.00$ \\
\midrule
\multirow{4}{*}{CIFAR-100}   &  \multirow{2}{*}{$\ell_2$} & $f_1$   & $53.43$ & $34.60$  & $31.88$ \\
   &   & $f_2$   & $63.74$ & $0.00$  & $0.00$ \\
\cmidrule(lr{1em}){2-6}
  &  \multirow{2}{*}{$\ell_\infty$} & $f_1$   & $44.37$ & $22.29$  & $18.36$ \\
   &   & $f_2$   & $34.1$ & $0.00$  & $0.00$ \\
\midrule
\multirow{4}{*}{ImageNet}   &  \multirow{2}{*}{$\ell_2$} & $f_1$   & $62.91$ & $47.60$  & $45.96$ \\
   &   & $f_2$   & $64.60$ & $0.00$  & $0.00$ \\
\cmidrule(lr{1em}){2-6}
  &  \multirow{2}{*}{$\ell_\infty$} & $f_1$   & $52.01$ & $24.33$  & $21.03$ \\
   &   & $f_2$   & $55.70$ & $0.00$  & $0.00$ \\
\bottomrule

\end{tabular}
}
\end{sc}
\end{table}
\begin{figure}[!b]
  \centering
  \includegraphics[width=0.7\columnwidth]{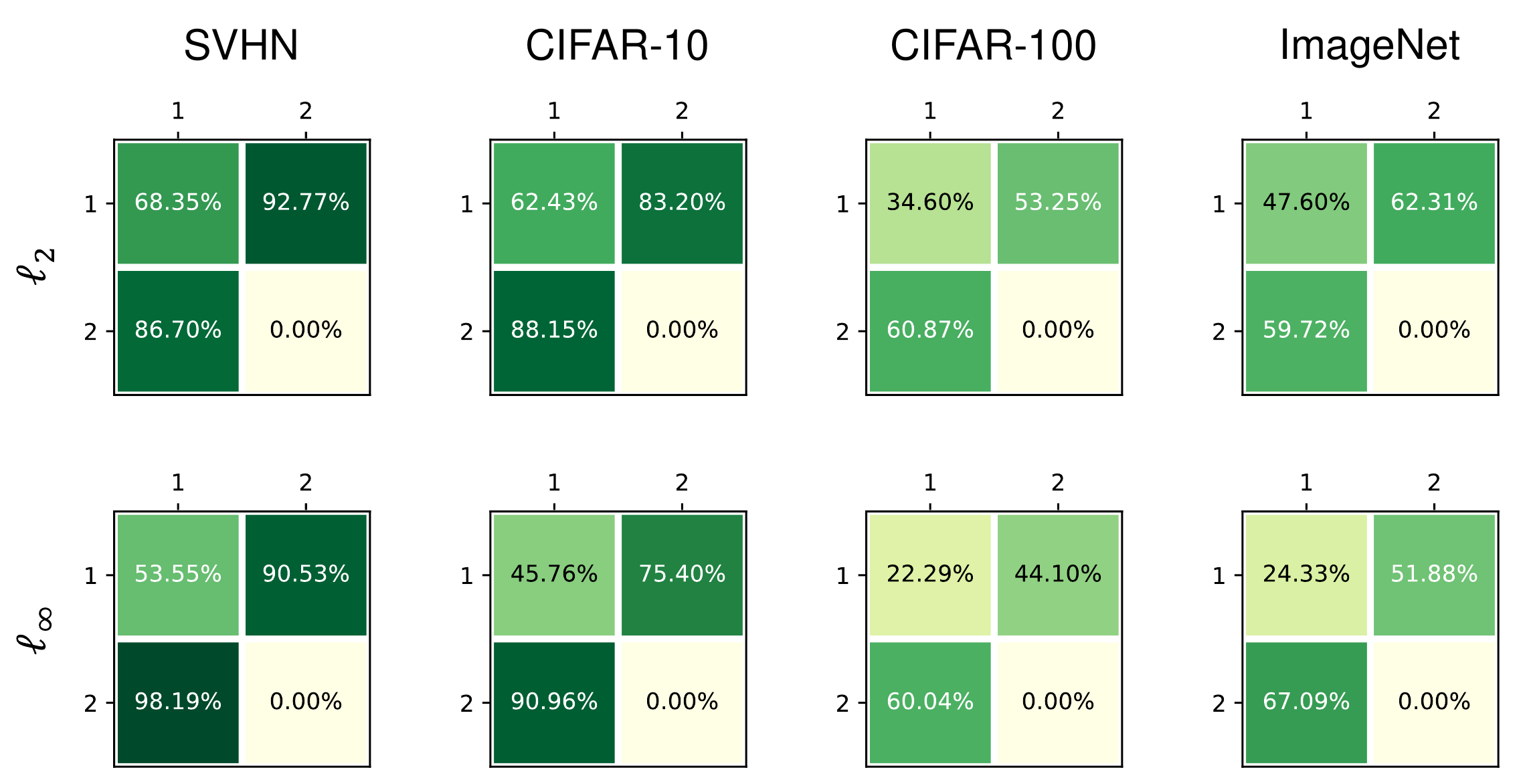}
  \caption{The cross-robustness matrices of all BAT REC from Table~\ref{tab:datasets}. As expected, BAT training induces an asymmetric ensemble were the first model is robust and the second model is completely compromised. The \emph{false} sense of robustness from BAT RECs stems from the high robustness of the second model to the adversarial samples of the first model.}
  \label{fig:cross-rob-bat}
\end{figure}

In this section, we provide more details on the performance of the individual models from the BAT RECs constructed in Table~\ref{tab:datasets} across different datasets. Table~\ref{tab:individual-models} reports the clean and robust accuracies of all the individual models. The robust accuracy is measured via both standard PGD and our ARC adversary. We consistently find that the performance of ARC and PGD in the standard single model setting to be similar, with ARC being slightly better. One thing to note is that, due to the nature of BAT, the second model $f_2$ is always \emph{not} robust to its own adversarial perturbation. However, as shown in Fig.~\ref{fig:cross-rob-bat}, the second model is very successful at defending against the adversarial samples of the first model, which is expected.


\subsection{More Parameter Sweeps}\label{app:sweeps}
In this section, we expand on the results in Section~\ref{ssec:tests}. Specifically, Fig.~\ref{fig:more-sweeps} provides a more complete version of Fig.~\ref{fig:comp} with added CIFAR-100 results, across two norms $\ell_\infty$ and $\ell_2$.  Across all datasets, norms, and parameter values, Fig.~\ref{fig:more-sweeps} consistently shows that the randomized ensembles obtained via BAT are more vulnerable than originally claimed. The ARC algorithm is much more effective at generating adversarial examples for RECs, in contrast to APGD.
\begin{figure}[hpb]
  \centering
    \includegraphics[width=0.9\columnwidth]{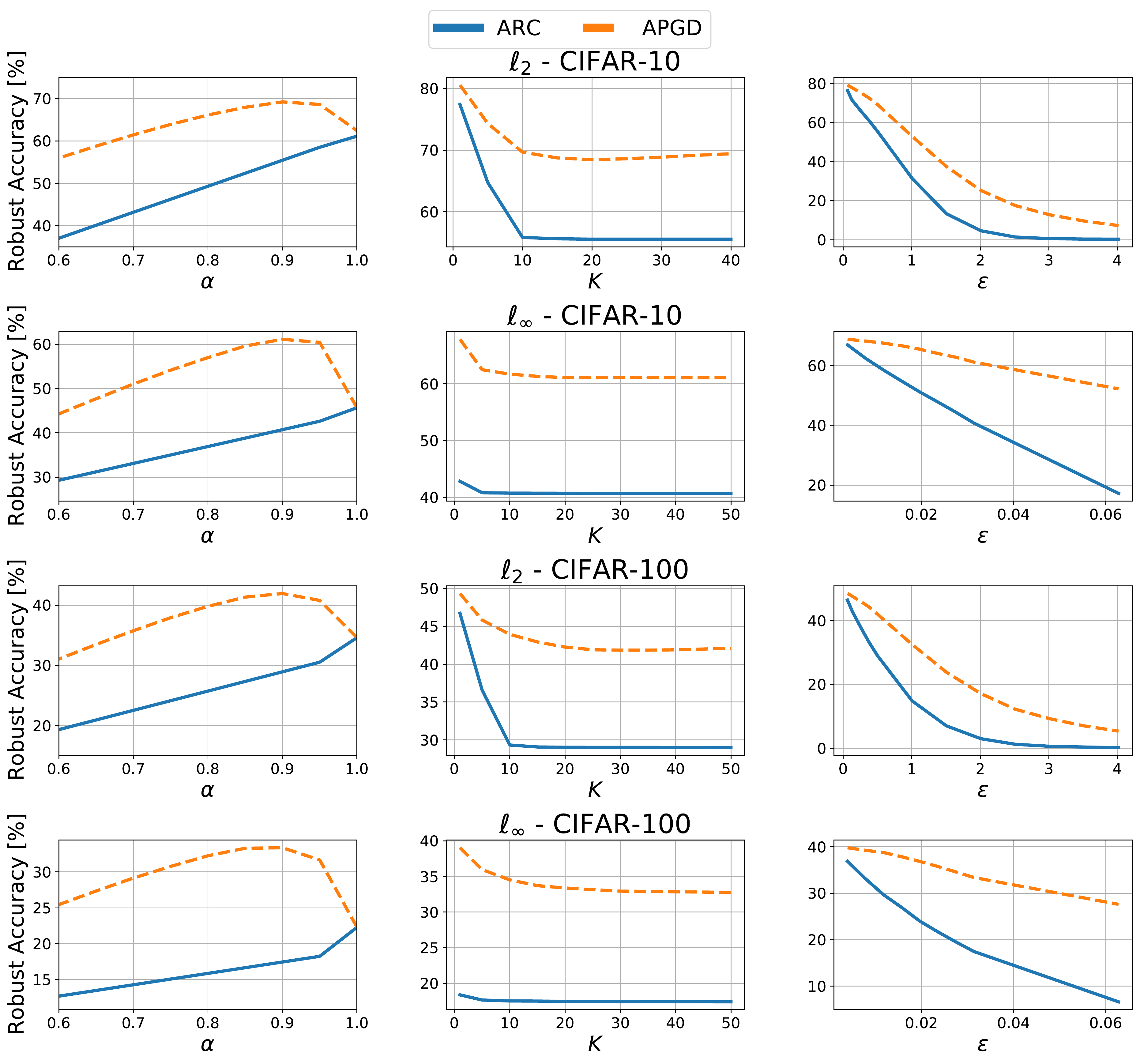}
  \caption{More comparisons between ARC and APGD using RECs of ResNet-20s obtained via BAT ($M=2$).}
  \label{fig:more-sweeps}
\end{figure}

\subsection{More on Adaptive Attacks}\label{app:attacks}
In this section, we explore different ways of deploying PGD-based adversaries for attacking RECs and compare them with ARC. We also employ the C\&W \cite{carlini2017towards} attack for $\ell_2$ norm-bounded perturbations, and adapt it to our setting.  
\subsubsection{Adaptive PGD}
In Section~\ref{sec:preliminaries}, we introduced the standard PGD attack, which performs the following update rule $\forall k \in [K]$:

\begin{equation} \label{eq:pgd-r}
    \bm{\delta}^{(k)} = \Pi_\epsilon^p \left( \bm{\delta}^{(k-1)} + \eta \mu_p\left(\nabla_{\vc{x}} l(f(\vc{x}+\bm{\delta}^{(k-1)}) ,y)\right)\right)
\end{equation}
where $l$ is the loss function, $f$ is the differentiable deterministic classifier, and $(\vc{x},y)$ is the labeled data-point. It is clear that some changes have to made in order to use PGD for a randomized ensemble $(\calF,\bm{\alpha})$. We propose some intuitive changes below.

\textbf{PGD}: Perhaps the simplest modification one can make is to randomize the attack as well! If the defender picks a model $f_i$ at random from the ensemble with probability $\alpha_i$, for some $i\in [M]$, then the attacker can also \textit{independently} sample a model $f_j$ at random with probability $\alpha_j$ and use it to construct the adversarial perturbation for $(\calF,\bm{\alpha})$ via PGD. Again, the discrete nature of our setup allows us to compute exactly the expected robust accuracy, when both the attacker and defender are randomizing their strategies:
\begin{equation}
    \bar{L} = \sum_{j=1}^M \alpha_j L(\vc{x}+\bm{\delta}_j,\bm{\alpha})
\end{equation}
where $\bm{\delta}_j$ is the adversarial perturbation obtained via PGD when attacking model $f_j$, and $L$ is the conditional expected robust accuracy from \eqref{eq:acc}.

\textbf{PGD-1}: Since the second model in BAT RECs is always not robust, one logical attack to consider is attacking the robust model only $f_1$, and ignore the second non-robust model. That is, we use standard PGD to attack $f_1$, and use the generated $\bm{\delta}$ to attack the REC.

\textbf{APGD}: The attacker can also adapt PGD by using the expected value of the loss function, and modifying \eqref{eq:pgd-r} as follows:

\begin{align}
\begin{split}
  \bm{\delta}^{(k)} &= \Pi_\epsilon^p \left( \bm{\delta}^{(k-1)} + \eta \mu_p\left(\nabla_{\vc{x}} \mean{l(f(\vc{x}+\bm{\delta}^{(k-1)}) ,y)}\right)\right)  \\
  &= \Pi_\epsilon^p \left( \bm{\delta}^{(k-1)} + \eta \mu_p\left( \sum_{i=1}^{M}\left[\alpha_i \nabla_{\vc{x}}l(f_i(\vc{x}+\bm{\delta}^{(k-1)}) ,y)\right]\right)\right)
\end{split}
\end{align}

Recall this is the adaptive PGD proposed in \cite{pinot2020randomization} and what we adopted in this paper (see Sections~\ref{sec:preliminaries} and \ref{sec:pgd}) for robustness evaluation of randomized ensembles.

\textbf{APGD-L}: Another method for computed the EOT for randomized ensembles is to compute the expected classifier outputs (logits), instead of the expected loss, which will yield the following change to \eqref{eq:pgd-r}:
\begin{align}
\begin{split}
  \bm{\delta}^{(k)} &= \Pi_\epsilon^p \left( \bm{\delta}^{(k-1)} + \eta \mu_p\left(\nabla_{\vc{x}} l(\mean{f(\vc{x}+\bm{\delta}^{(k-1)})} ,y)\right)\right)  \\
  &= \Pi_\epsilon^p \left( \bm{\delta}^{(k-1)} + \eta \mu_p\left(\nabla_{\vc{x}} l\left(\sum_{i=1}^M \alpha_i f_i(\vc{x}+\bm{\delta}^{(k-1)}) ,y\right)\right)\right)
\end{split}
\end{align}
The authors of \cite{pinot2020randomization} were also aware of this modification. However, they argued that averaging the logits may cause discrepancies, since logits from two different models can have completely different ranges for values. The compelling argument for using the loss function, is that it provides an elegant and natural way of normalizing the outputs. Note that, our analysis in Section~\ref{sec:pgd} is also applicable for this version of adaptive PGD, and therefore we expect it to inherit the shortcomings of APGD and provide a \emph{false} sense of robustness for RECs.

\subsubsection{C\&W Attack}
The C\&W attack \cite{carlini2017towards} is another popular and powerful attack originally designed for compromising defensive distillation \cite{papernot2016distillation} in neural networks. For a given classifier $f$ and data-point $(\vc{x},y)$, the C\&W attack finds an adversarial perturbation by solving the following constraint optimization problem:
\begin{equation} \label{eq:cw}
    \bm{\delta}^* = \argmin_{\bm{\delta}+\vc{x} \in [0,1]^D} \pnorm{\bm{\delta}}{2} + c h(\vc{x}+\bm{\delta})
\end{equation}
where $h(\vc{x}+\bm{\delta})$ is some cost function such that $h(\vc{x}+\bm{\delta}) <0$ if and only if $\argmax [f(\vc{x}+\bm{\delta})]_i \neq y$, and $c>0$ is a constant that is often optimized via binary search. Note that the C\&W attack \cite{carlini2017towards} tries to find the minimum $\ell_2$ norm perturbation that fools $f$, such that the perturbed image satisfies the box constraints. The resulting perturbation therefore is not guaranteed to be norm-bounded by some $\epsilon$. However, a common workaround is to simply project the final perturbation onto the $\ell_2$ ball of radius $\epsilon$.

Solving \eqref{eq:cw} is very challenging from an optimization point of view. To facilitate the optimization, the authors propose a change of variable:
\begin{equation}
    \bm{\delta} = \frac{1}{2}(\text{tanh}(\bm{\theta})+1) - \vc{x}
\end{equation}
where $\text{tanh}$ is an element-wise hyperbolic tangent, which guarantees that $\bm{\delta} + \vc{x}$ satisfies the box constraints $\forall \bm{\theta} \in \reals^D$. Therefore, the C\&W attack solves for $\bm{\theta}^*$ via:
\begin{equation}\label{eq:cw-theta}
    \bm{\theta}^* = \argmin_{\bm{\theta} \in \reals^D} \pnorm{\frac{1}{2}(\text{tanh}(\bm{\theta})+1) - \vc{x}}{2}^2 + c h(\frac{1}{2}(\text{tanh}(\bm{\theta})+1))
\end{equation}
where $h$ is:
\begin{equation}
    h(\vc{u}) = \max\left\{[f(\vc{u})]_y - \max_{i\in [C]\setminus \{y\}} [f(\vc{u})]_i, -\kappa\right\}
\end{equation}
and $\kappa$ controls the confidence (usually set to $0$). Finally, the optimization in \eqref{eq:cw-theta} can be solved via gradient based algorithms, such as SGD or ADAM \cite{kingma2014adam}, while $c>0$ is chosen using binary search. 

Analogous to adapting PGD, we will experiment with three version of the C\&W attack for RECs: 

\textbf{C\&W}: The attacker randomly samples a classifier and attacks it via the classical C\&W adversary.

\textbf{C\&W-1}: The attacker only considers the first robust model (for BAT RECs) via the classical C\&W adversary, akin to PGD-1.

\textbf{AC\&W}: The attacker uses the expected value of $h$ when solving \eqref{eq:cw-theta}, akin to APGD.

\textbf{AC\&W-L}: The attacker uses the expected output of the REC (exepected logits) when solving \eqref{eq:cw-theta}, akin to APGD-L.

To ensure proper implementation, we adopt the open-source Foolbox \cite{rauber2017foolbox} implementation of the C\&W attack. We adopt the common attack hyper-parameters: $9$ steps for binary search with an initial constant $c=0.001$, optimizer learning rate $0.01$, confidence $\kappa = 0$, and run the attack for a total of $K=50$ iterations. 

\begin{figure}[t]
  \centering
    \includegraphics[width=\columnwidth]{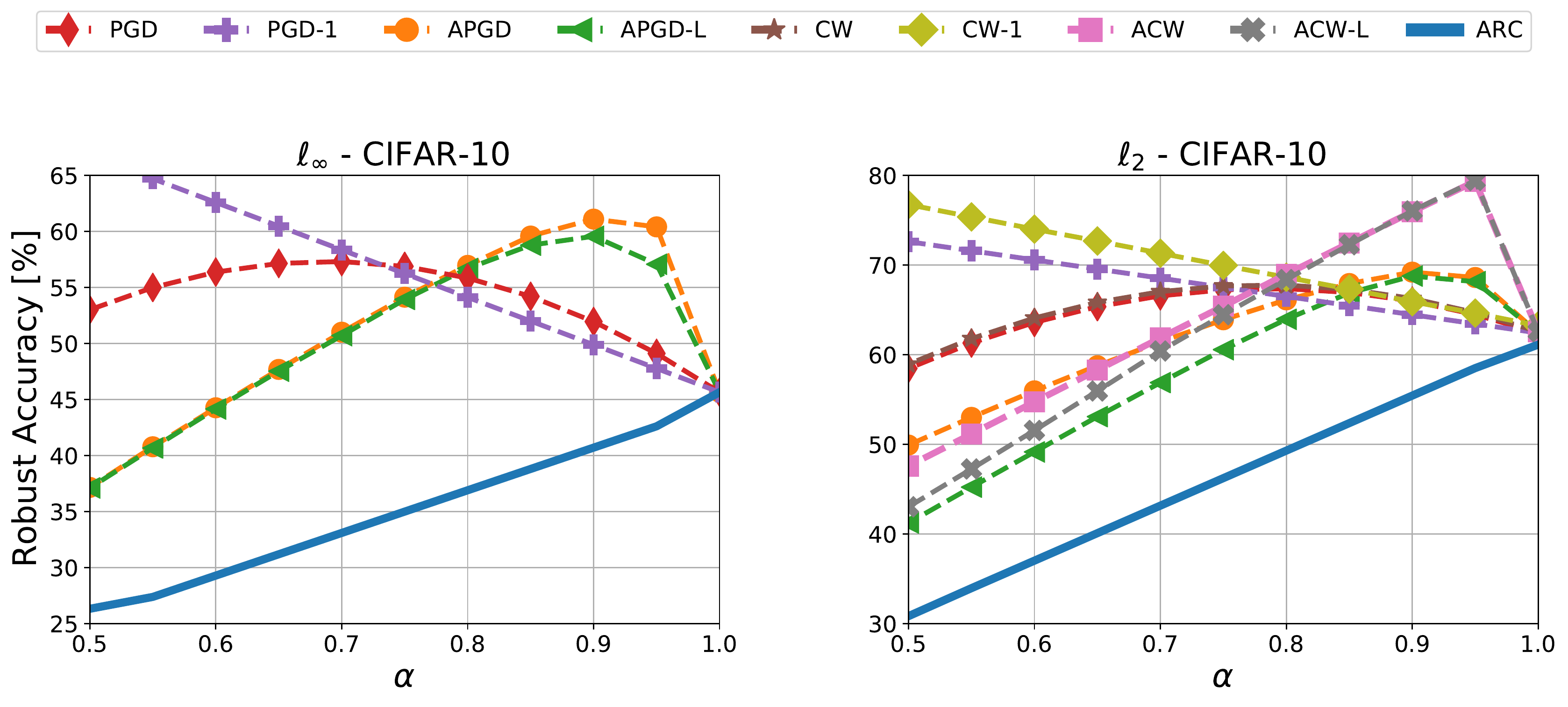}
  \caption{Robust accuracy vs. sampling probability of: $\ell_\infty$ (left) and $\ell_2$ (right) BAT REC of ResNet-20s on CIFAR-10. Robust accuracy is evaluated using various adaptive attacks detailed in Appendix~\ref{app:attacks} and the proposed ARC attack algorithm.}
  \label{fig:more-attacks}
\end{figure}

\subsubsection{ARC vs. Adaptive Attacks}
We now apply all proposed variants of PGD and C\&W for attacking the RECs of ResNet-20s on CIFAR-10 (from Section~\ref{ssec:tests}). Note that the C\&W variants will only be evaluated against the $\ell_2$ trained REC. Figure.~\ref{fig:more-attacks} plots the robust accuracy of all methods while sweeping the defender's sampling probability $\bm{\alpha}=(\alpha,1-\alpha)$. 
We make the following observations:
\begin{itemize}
    \item The ARC algorithm outperforms all variants of PGD and C\&W and across all values of $\alpha$, and by massive margins ($\sim 20\%$) for $\alpha \in [0.5, 0.9]$.
    \item ARC demonstrates that there is no benefit in using RECs, since the highest robustness achieved corresponds to $\alpha=1$. Note that $\alpha=1$ reduces the REC to the deterministic adversarially trained classifier $f_1$.
    \item Adapting PGD via the expected loss, instead of computing the expected logits yields little to no difference, and the same can be said for C\&W as well.
    \item Interestingly, attacking the first model only is a much more effective attack than any exsitng adaptive attack in the high sampling probabilty regime ($\alpha\in[0.8,1]$). Despite this fact, ARC remains a much stronger attack for all values of $\alpha$. 
    \item The robustness evaluation of APGD and PGD follow completely different trends, with both providing \emph{false} sense of robustness. The same can be said for AC\&W and CW as well.
\end{itemize}

\subsection{More on Ensemble Training Methods} \label{app:ensemble}
In this section, we expand on the results from Section~\ref{ssec:ensemble}, where we constructed RECs from pre-trained DVERGE models \cite{yang2020dverge}. In addition to DVERGE, we experiment with ADP \cite{pang2019improvingADP} and TRS \cite{yang2021trs} using an ensemble of three ResNet-20s on CIFAR-10. We used the pre-trained ADP models that were publicly available thanks to \cite{yang2020dverge}, and trained TRS models using their publicly released code on GitHub \cite{yang2021trs}.

We first plot the cross-robustness matrices for each ensemble in Fig.~\ref{fig:static-ens-mat}. None of the classifiers in the ensembles are \emph{actually} robust, which is expected. These training methods are designed to improve the robustness to transfer attacks, which is why we notice high cross-robustness in all three matrices. 

We construct RECs for each method via equiprobable sampling, and evaluate the robustness against both APGD and ARC in Fig.~\ref{fig:static-ens-sweep} for different norm radius $\epsilon$ values. A common observation to all three methods is that ARC consistently outperforms APGD, and with large margins for TRS and DVERGE. These experiments demonstrate that RECs are \emph{vulnerable}, regardless of the training procedure thus motivating the need for improved adversarial training algorithms for randomized ensembles. 

\begin{figure}[h]
  \centering
    \subfloat[]{\includegraphics[height=4.5cm]{figures/cross_rob_cifar10_linf_resnet20_dverge_m3.pdf}\label{fig:dverge-mat}}%
    \qquad%
    \subfloat[]{\includegraphics[height=4.5cm]{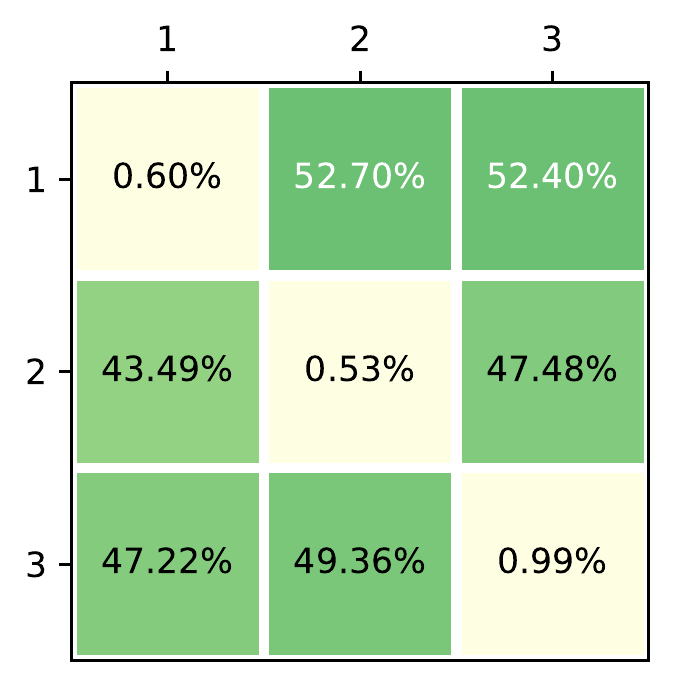}\label{fig:adp-mat}}%
    \qquad%
    \subfloat[]{\includegraphics[height=4.5cm]{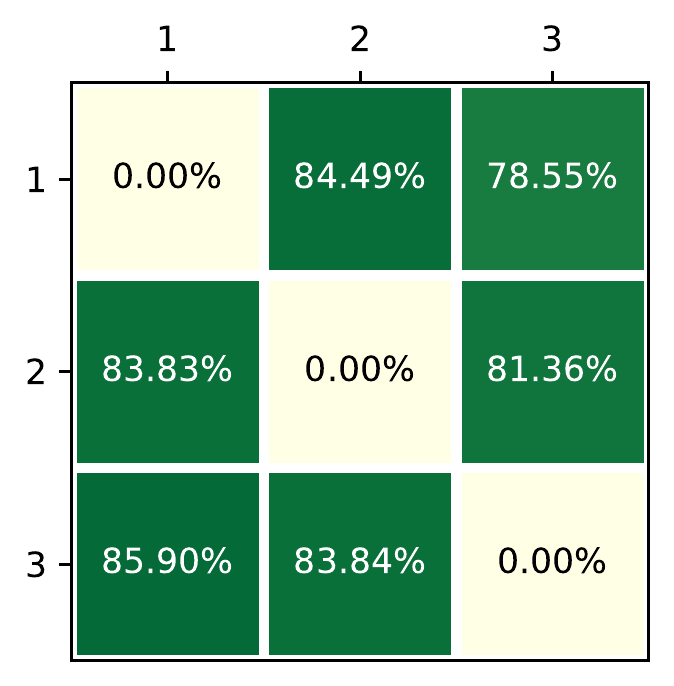}\label{fig:trs-mat}}
  \caption{Cross-robustness matrices for: (a) DVERGE, (b) ADP, and (c) TRS trained ensembles of three ResNet-20s on CIFAR-10.}
  \label{fig:static-ens-mat}
\end{figure}

\begin{figure}[bht]
  \centering
    \subfloat[]{\includegraphics[height=4cm]{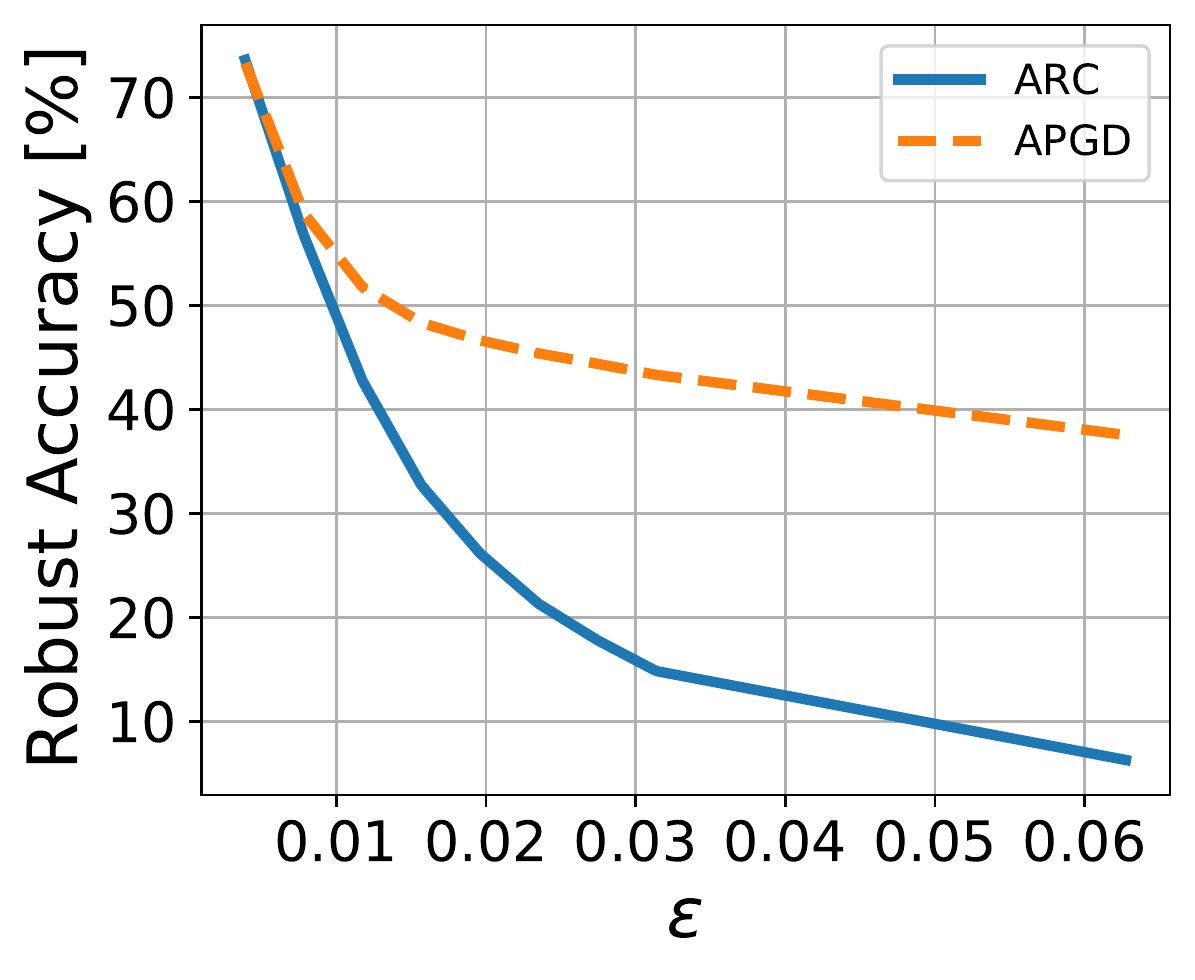}\label{fig:dverg-sweep}}%
    \qquad%
    \subfloat[]{\includegraphics[height=4cm]{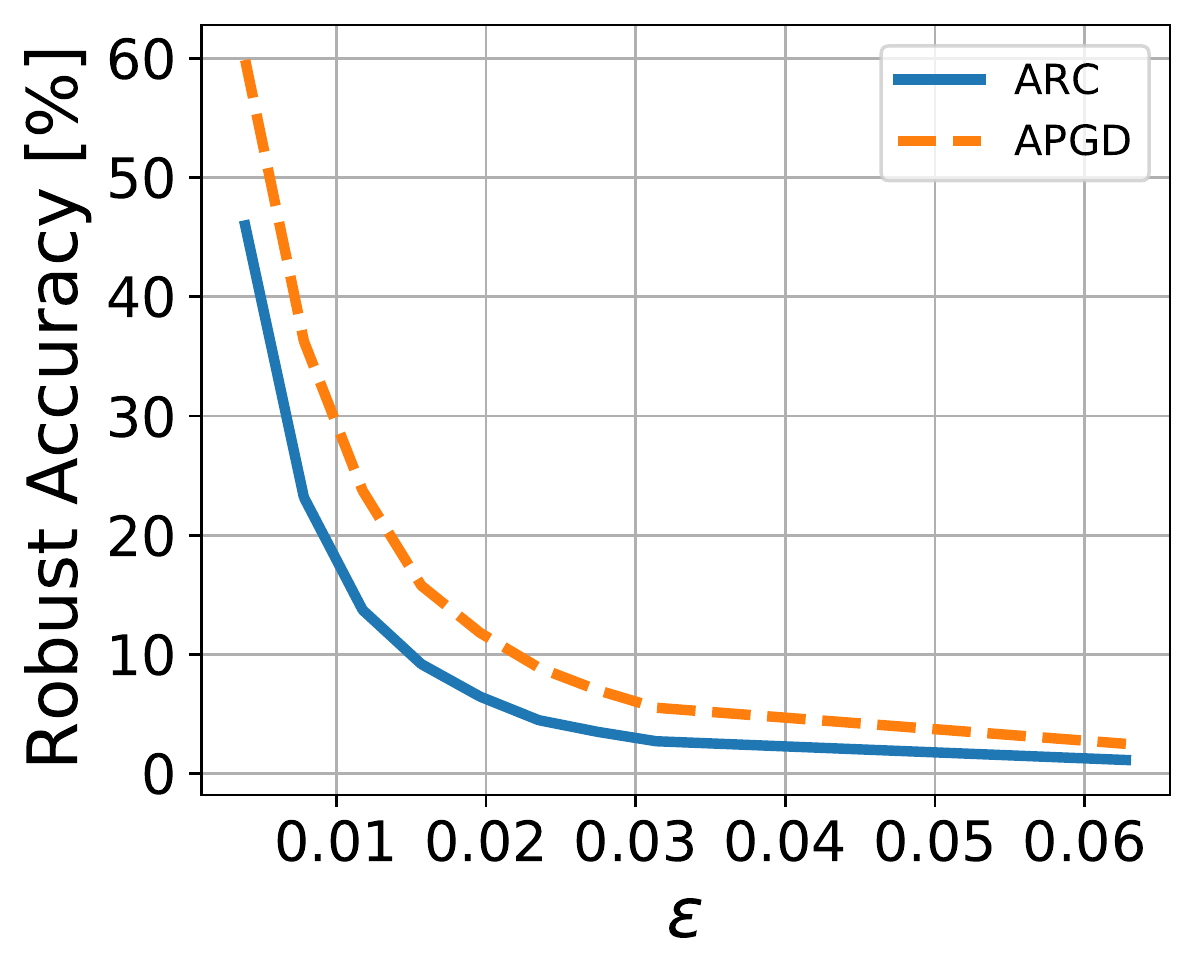}\label{fig:adp-sweep}}%
    \qquad%
    \subfloat[]{\includegraphics[height=4cm]{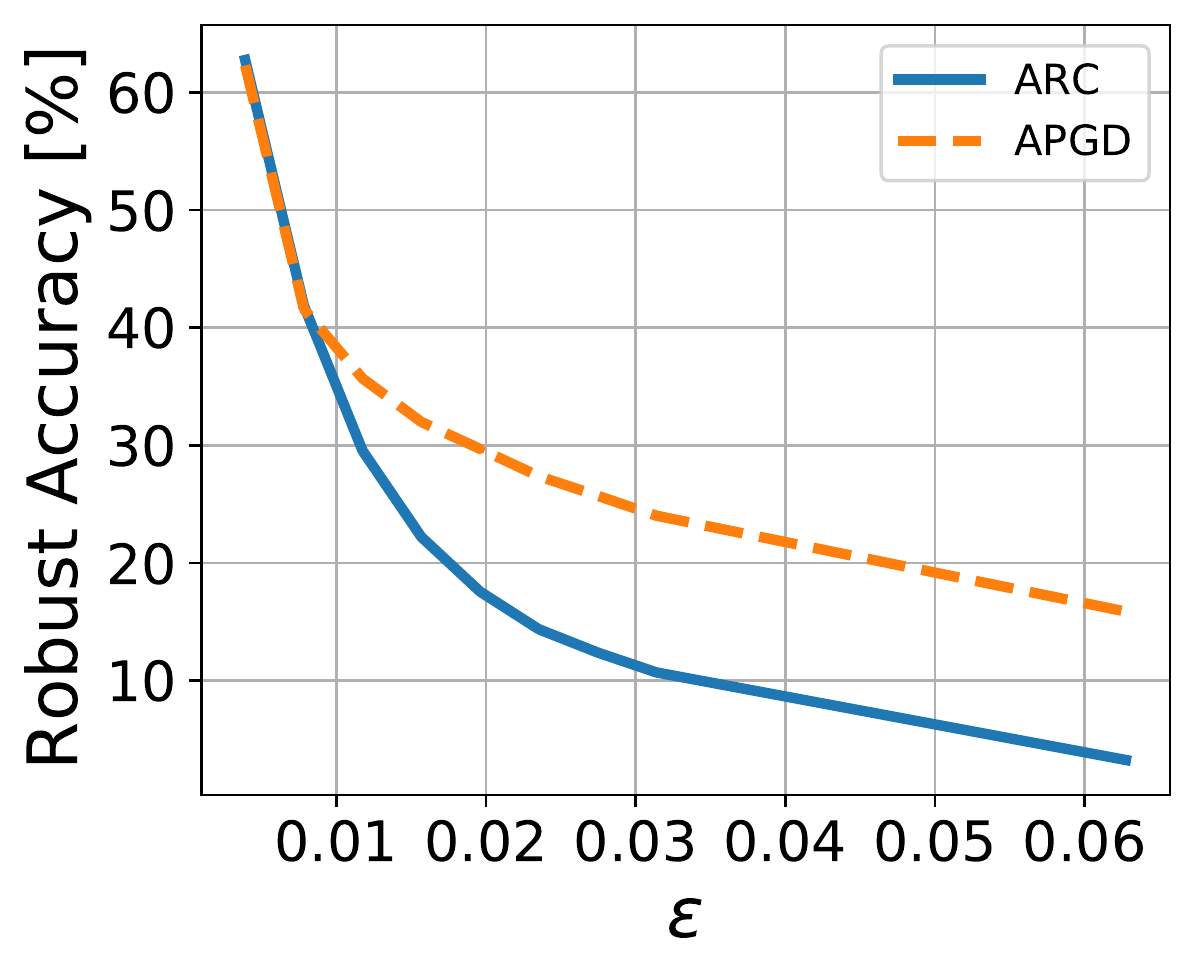}\label{fig:trs-sweep}}
  \caption{Robust accuracy vs. norm radius $\epsilon$ using both ARC and APGD for: (a) DVERGE, (b) ADP, and (c) TRS trained ensembles of three ResNet-20s on CIFAR-10.}
  \label{fig:static-ens-sweep}
\end{figure}

\subsection{Randomized Ensembles from Independent Adversarially Trained Classifiers} \label{app:iat}

In this section, we investigate the performance of RECs constructed from independent adversarially trained (IAT) deep nets. Specifically, we train $M$ ResNet-20s using $\ell_2$ AT on CIFAR-10. We use different random initializations for each network by changing the random seed. Doing so will result in a symmetric ensemble as seen via the cross-robustness matrix in Fig.~\ref{fig:indep-cross}. In this setting, all the models are robust with $\sim 62\%$ robust accuracy, as opposed to the asymmetric BAT setting where the first model is robust and the second model is completely compromised. Using equiprobable sampling, we construct RECs from the $M$ models and compare the performance of both ARC and APGD with varying ensemble size $M$ in Fig.~\ref{fig:indep-sweep}, e.g., $M=2$ means we only use the first 2 models and ignore the rest. Both ARC and APGD follow the same trend since the robustness of the ensemble increases with $M$. The ARC algorithm is consistently stronger than APGD, albeit the gap is relatively small compared to BAT. This is expected since all models in the ensemble are robust. We do note however that the improvement in robustness is limited and requires large ensemble size for it to be significant, e.g., $<3\%$ improvement with $M=8$. Finally, we strongly believe that future ensemble training methods need to employ IAT RECs as a baseline.

\begin{figure}[!hbt]
  \centering
    \subfloat[]{\includegraphics[height=6.5cm]{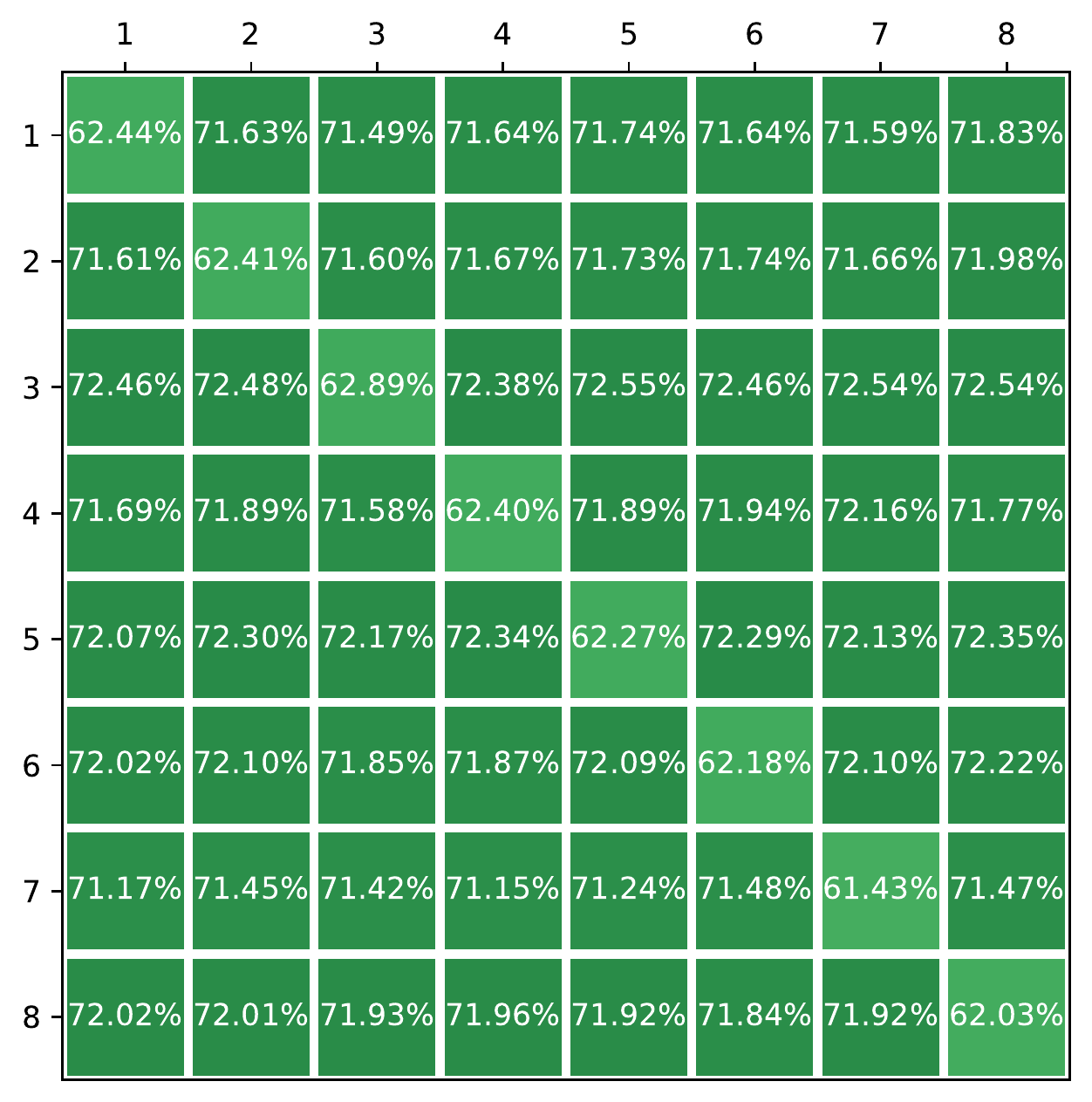}\label{fig:indep-cross}}%
    \qquad%
    \subfloat[]{\includegraphics[height=6.5cm]{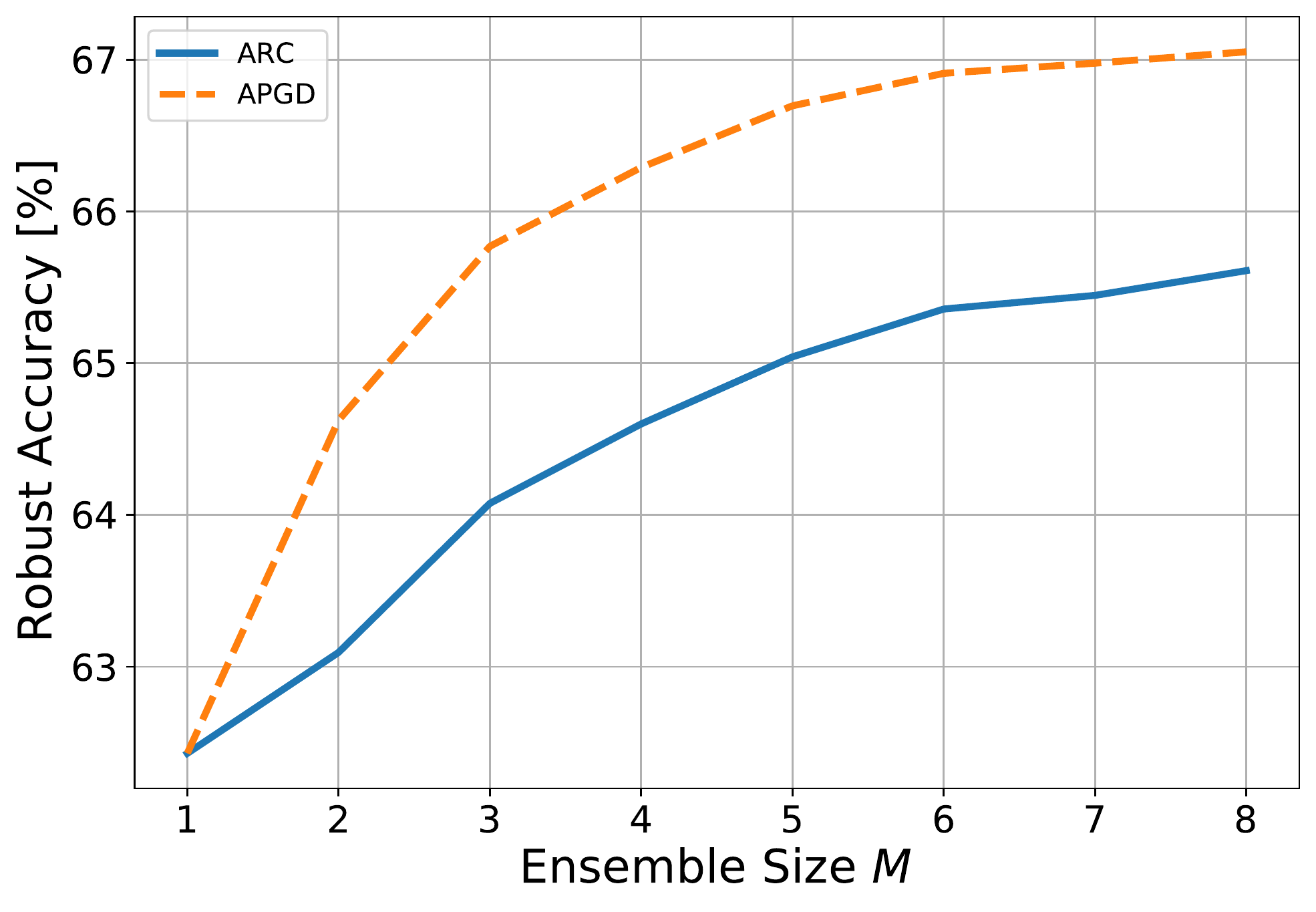}\label{fig:indep-sweep}}%
 \label{fig:indep}
  \caption{Robustness performance of an REC constructed from $\ell_2$ independent adversarially trained models ($M=8$): (a) cross-robustness matrix, and (b) robust accuracy vs. size of the ensemble $M$ using both ARC and APGD.}
\end{figure}

\subsection{Visualization of Adversarial Samples}
In this section, we provide some visual examples of $\ell_\infty$ norm-bounded adversarial perturbations obtained via APGD and ARC. Specifically, Fig.~\ref{fig:more-images} shows three examples from our ImageNet experiments, where each clean image (unperturbed) is correctly classified by a randomized ensemble of ResNet-18s (from Table~\ref{tab:datasets}). The APGD adversarial samples are also correctly classified by both networks in the ensemble, and thus are unable to fool the ensemble. In contrast, the ARC adversarial samples completely compromise the ensemble, i.e., they fool both the networks. All adversarial perturbations are $\ell_\infty$ norm-bounded with $\epsilon=4/255$.
\begin{figure}[hpb]
  \centering
    \includegraphics[width=\columnwidth]{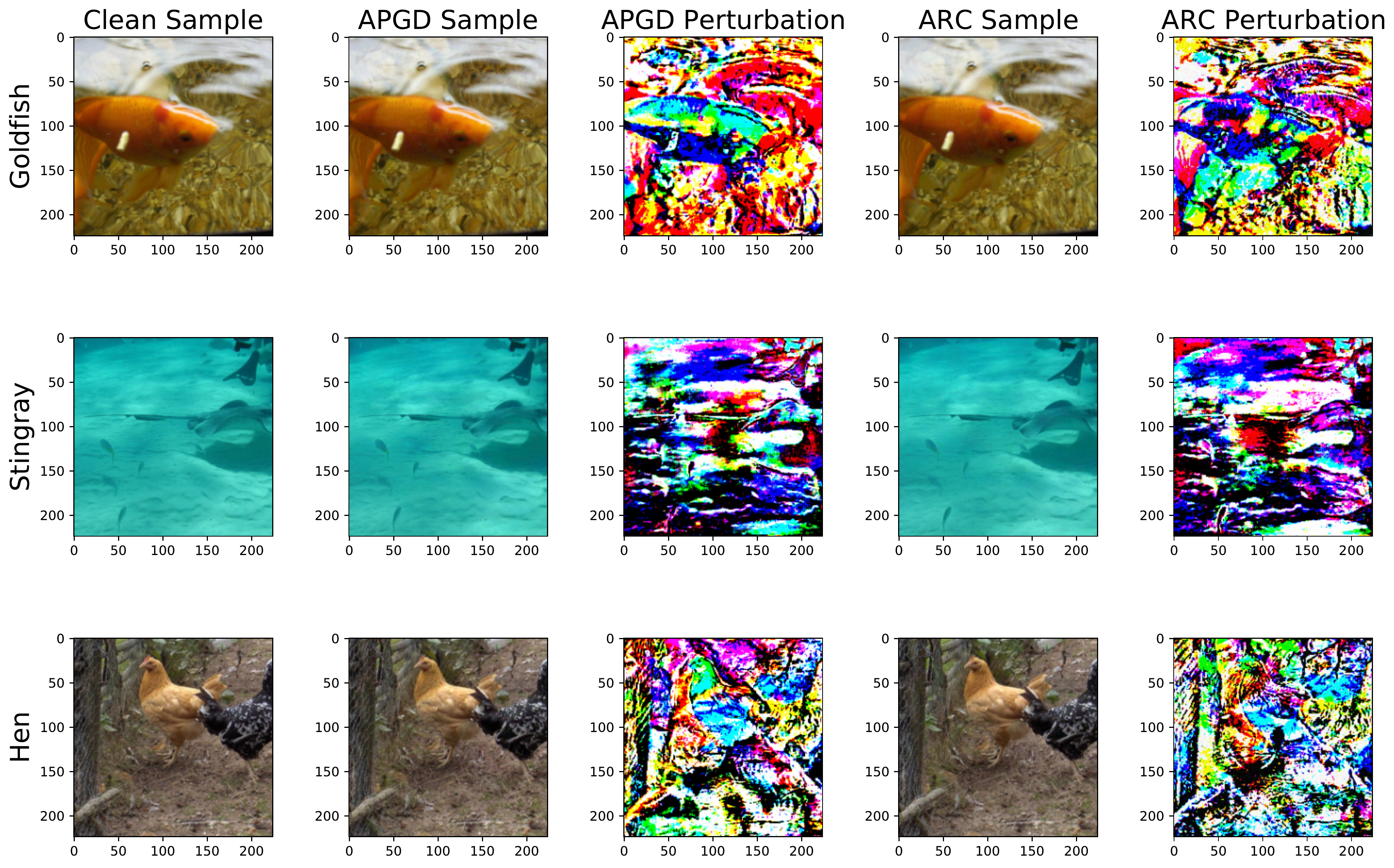}
  \caption{Visual examples of $\ell_\infty$ norm-bounded adversarial perturbations obtained via APGD and ARC evaluated against a randomized ensemble of ResNet-18s on ImageNet. All perturbations are $\ell_\infty$ norm-bounded with $\epsilon=4/255$. The APGD and clean samples are correctly classified by all members of the ensemble, whereas the ARC samples are misclassified by all members of the same ensemble.}
  \label{fig:more-images}
\end{figure}

\clearpage

\section{Scalability of the ARC Algorithm}\label{app:arclite}

\begin{table}[!b]
\centering
\caption{The robust accuracy and run time of the proposed approximate version of ARC evaluated against two RECs on CIFAR-100. The evaluation was run on a single NVIDIA 1080 Ti GPU.}
\label{tab:arclite}
\vskip 0.15in
\begin{sc}
\begin{tabular}{c c c c }

\toprule
\multirow{2}{*}{Search Size ($G$) }  & \multirow{2}{*}{Run Time [min]}& \multicolumn{2}{c}{Robust Accuracy [\%]} \\
&  & $\ell_2$& $\ell_\infty$\\

\midrule
$1$    &$2.06$ &  $29.79$ &$18.59$  \\
\midrule
$2$   & $2.56$  & $29.08$& $17.71$   \\
\midrule
$3$ & $3.05$  &$28.97$ & $17.55$ \\
\midrule
$4$  & $3.56$  & $28.92$ & $17.45$\\
\midrule
\midrule
$10$   & $6.44$  & $28.88$ & $17.32$\\
\midrule
\midrule
$50$ & $26.46$   & $28.88$ & $17.32$\\
\midrule
\midrule
$99$   & $49.91$ & $28.88$ & $17.32$ \\
\bottomrule

\end{tabular}
\end{sc}
\end{table}

In this section, we expand on Section~\ref{ssec:efficient} and provide more details on approximating the search procedure in \eqref{eq:arc-search} in the ARC Algorithm. Recall that the goal is to find the $\ell_p$ shortest distance $\tilde{\zeta}$ between $\vc{x}$ and the hyper-planes that capture the decision boundary of $\tilde{\calR}_m(\vc{x})$, as well as the corresponding unit $\ell_p$ norm direction $\tilde{\vc{g}}$. The $C-1$ hyper-planes are defined by:
\begin{equation}
\calH_j = \left\{\vc{u}\in \reals^D: \tp{\tilde{\vc{w}}_j}\left(\vc{u}-\vc{x}\right) + \tilde{h}_j= 0 \right\}\ \ \ \forall j\in[C]\setminus\{m\}
\end{equation}
where $m \in [C]$ is the label assigned to $\vc{x}$ by $f$, $\tilde{h}_j = \left[f(\vc{x})\right]_m - \left[f(\vc{x})\right]_j$ and $\tilde{\vc{w}}_j = \nabla\left[f(\vc{x})\right]_m - \nabla\left[f(\vc{x})\right]_j$  $\forall j \in [C]\setminus\{m\}$. 
Therefore, in order to obtain $\tilde{\zeta}$ and $\tilde{\vc{g}}$, we find the closest hyper-plane:
\begin{equation}\label{eq:search-a}
    n= \argmin_{{j\in[C]\setminus \{m\}}} \frac{\left|\tilde{h}_j\right|}{\pnorm{\tilde{\vc{w}}_j}{q}}  = \argmin_{{j\in[C]\setminus \{m\}}} \tilde{\zeta}_j
\end{equation}
and then compute:
\begin{equation}
 \tilde{\zeta} = \tilde{\zeta}_n \ \ \ \ \& \ \ \ \  \tilde{\vc{g}} = -\frac{|\tilde{\vc{w}}_n|^{q-1} \odot \text{sgn}(\tilde{\vc{w}}_n)}{\norm{\tilde{\vc{w}}_n}_q^{q-1}}  
\end{equation}

The search in \eqref{eq:search-a} can be computationally demanding for datasets with large number of classes, e.g., $C=100$ for CIFAR-100 or $C=1000$ for ImageNet, as the gradient computation routine required for finding $\tilde{\vc{w}}_j$ is very compute intensive, and has to be executed $C-1$ times, during every iteration $k$ and classifier $f_i$. 
To alleviate this issue, we perform an approximate search procedure. Intuitively, we expect the closest hyper-plane $n$ would correspond to the output logit $[f(\vc{x})]_n$ that is closest to $[f(\vc{x})]_m$. Therefore, instead of searching over all $C-1$ hyper-planes, we only need to search over $1\leq G \leq C-1$ hyper-planes that correspond to the $G$ largest logits $[f(\vc{x})]_j$ $j\neq m$. Specifically, we  propose approximating the search as follows:

\begin{enumerate}
    \item compute the quantities $\tilde{h}_j$ $\forall j \in [C]\setminus\{m\}$ as before
    \item construct the sorted index set $\calJ = \{j_1, ..., j_G\} \subseteq [C]\setminus \{m\}$ such that:
    \begin{equation}
        0<\tilde{h}_{j_1} \leq  \tilde{h}_{j_2} \leq ... \leq \tilde{h}_{j_G} \leq \tilde{h}_{t}  \ \ \  \ \forall t\in [C]\setminus( \{m\} \cup \calJ)
    \end{equation}
    \item search for the closest hyper-plane over the restricted set $\calJ$:
  \begin{equation}\label{eq:search-b}
    n= \argmin_{j\in \calJ} \frac{\left|\tilde{h}_j\right|}{\pnorm{\tilde{\vc{w}}_j}{q}}  = \argmin_{j\in \calJ} \tilde{\zeta}_j
\end{equation}  
\end{enumerate}
The parameter $G$ controls the accuracy-efficiency trade-off of the approximation, and setting $G=C-1$ yields the exact search in \eqref{eq:search-a}.

To demonstrate the efficacy of this approximation, we use this version of ARC for evaluating the robustness of two RECs of ResNet-20's obtained via $\ell_2$ BAT and $\ell_\infty$ BAT on CIFAR-100. We also measure the total evaluation time, i.e., the total time our script requires to evaluate the robustness of the REC using the entire 10000 samples of the CIFAR-100 testset. We use a workstation with a single NVIDIA 1080 Ti GPU and iterate over the testset with a mini-batch size of $256$ for all evaluations. Table~\ref{tab:arclite} reports the corresponding robustness and required evaluation time for different values of $G$. Table~\ref{tab:arclite} demonstrates that using $G=4$ provides the same robustness evaluation as the full ARC algorithm ($G=99$), while requiring less than $4$ minutes to run, as opposed to $50$ minutes required by $G=99$. This massive reduction in evaluation time, while maintaining the fidelity of robustness evaluation, allows us to scale the ARC algorithm to more complex datasets. Therefore, in all of our CIFAR-100 and ImageNet experiments, we will use this version of ARC with $G=4$.

\end{document}